\theoremstyle{plain}
\newtheorem{theorem}{Theorem}[section]
\newtheorem{proposition}[theorem]{Proposition}
\theoremstyle{definition}
\theoremstyle{remark}
\newtheorem{remark}[theorem]{Remark}
\title{Random Forest Autoencoders for Guided Representation Learning}
\author{%
    Adrien Aumon\textsuperscript{\textnormal{1}} \\
  \texttt{adrien.aumon@umontreal.ca} \\
  \And
  Shuang Ni\textsuperscript{\textnormal{1}} \\
  \texttt{shuang.ni@mila.quebec} \\
  \AND
  Myriam Lizotte\textsuperscript{\textnormal{1}} \\
  \texttt{myriam.lizotte@mila.quebec} \\
  \And
  Guy Wolf\textsuperscript{\textnormal{1}} \\
  \texttt{guy.wolf@umontreal.ca} \\
  \And
  Kevin R. Moon\textsuperscript{\textnormal{2}} \\
  \texttt{kevin.moon@usu.edu} \\
    \And
  Jake S. Rhodes\textsuperscript{\textnormal{3}\dag} \\
  \texttt{rhodes@stat.byu.edu} \\
  \\
  \textsuperscript{1}Université de Montréal; Mila -- Quebec AI Institute\\
  \textsuperscript{2}Utah State University\\
  \textsuperscript{3}Brigham Young University\\
  \textsuperscript{\dag}Corresponding author
}
\begin{document}

\maketitle

\begin{abstract}
Extensive research has produced robust methods for unsupervised data visualization. Yet supervised visualization—where expert labels guide representations—remains underexplored, as most supervised approaches prioritize classification over visualization. Recently, RF-PHATE, a diffusion-based manifold learning method leveraging random forests and information geometry, marked significant progress in supervised visualization. However, its lack of an explicit mapping function limits scalability and its application to unseen data, posing challenges for large datasets and label-scarce scenarios. To overcome these limitations, we introduce Random Forest Autoencoders (RF-AE), a neural network-based framework for out-of-sample kernel extension that combines the flexibility of autoencoders with the supervised learning strengths of random forests and the geometry captured by RF-PHATE. RF-AE enables efficient out-of-sample supervised visualization and outperforms existing methods, including RF-PHATE's standard kernel extension, in both accuracy and interpretability. Additionally, RF-AE is robust to the choice of hyperparameters and generalizes to any kernel-based dimensionality reduction method.
\end{abstract}

\section{Introduction}

Manifold learning-based visualization methods, such as $t$-SNE~\cite{vanDerMaaten2008tsne}, UMAP~\cite{McInnes2018umap}, and PHATE~\cite{Moon2019phate}, are essential for exploring high-dimensional data by revealing patterns, clusters, and outliers through low-dimensional embeddings. While these methods excel at uncovering dominant data structures, they often fail to capture task-specific insights when auxiliary labels or metadata are available. Supervised approaches like RF-PHATE~\cite{rhodes2024gaining} bridge this gap by integrating label information into the kernel function through Random Forest-derived proximities~\cite{rhodes2023geometry}, generating representations that align with domain-specific objectives without introducing the exaggerated separations or distortions seen in class-conditional methods~\cite{hajderanj2021impactsupman}. Specifically, RF-PHATE has provided critical insights in biology, such as identifying multiple sclerosis subtypes, demonstrating antioxidant effects in lung cancer cells, and aligning COVID-19 antibody profiles with patient outcomes~\cite{rhodes2024gaining}.

However, most supervised and unsupervised manifold learning algorithms generate fixed coordinates within a latent space but lack a mechanism to accommodate new observations. Therefore, to embed previously unseen data, the algorithm must rerun with the new data as part of the training set. One well-known solution to this lack of out-of-sample (OOS) support is the Nyström extension~\cite{bengio2003out} and its variants, such as linear reconstruction~\cite{roweis2000lle} or geometric harmonics~\cite{coifman2006geometric}. These methods derive an empirical mapping function for new points defined as the linear combination of precomputed training embeddings, weighted with a kernel function expressing the similarities between the new point and the points in the training set. However, these methods are restricted to linear kernel mappings and are based on an unconstrained least-squares minimization problem~\cite{bengio2003out}, which makes them sensitive to the quality of the training set and prone to failing in accurately recovering the true manifold structure~\cite{Rudi2015less, MendozaQuispe2016extreme}. As an alternative extension method, recent neural network-based approaches, such as Geometry-Regularized Autoencoders (GRAE)~\cite{duque2020grae,duque2022geometry}, offer a promising solution for extending embeddings to OOS data points.

In this study, we present Random Forest Autoencoders (RF-AE), an autoencoder (AE) architecture that addresses OOS extension, while taking inspiration from the principles of GRAE~\cite{duque2022geometry}, which uses a manifold embedding to regularize the bottleneck layer. Instead of reconstructing the original input vectors, RF-AE incorporates supervised information by reconstructing Random Forest- Geometry- and Accuracy-Preserving (RF-GAP) proximities~\cite{rhodes2023geometry}, enabling us to learn an embedding function extendable to new data in a supervised context. Furthermore, we introduce a proximity-based prototype selection approach to reduce time and space complexities during training and inference without compromising embedding quality. Notably, our approach extends OOS examples without relying on label information for new points, which is helpful in situations where label information is expensive or scarce.

We empirically show that RF-AE outperforms existing approaches in embedding new data while preserving the local and global structure of the important features for the underlying classification task. RF-AE improves the adaptability and scalability of the manifold learning process, allowing for seamless integration of new data points while maintaining the desirable traits of established embedding methods.

\section{Related work}

\subsection{Parametric embedding through multi-task autoencoders}\label{subsec:multitask_autoencoders}
Given a high-dimensional training dataset \( X = \{\mathbf{x}_i \in \mathbb{R}^D \mid i = 1, \hdots, N\} \)—where \( X \) can represent tabular data, images, or other modalities—a manifold learning algorithm can be extended to test data by training a neural network, typically a multi-layer perceptron (MLP) to regress onto precomputed non-parametric embeddings $\mathbf{z}_i^G$, or by means of a cost function underlying the manifold learning algorithm, as in parametric  $t$-SNE~\cite{maaten2009parametric_tsne} and parametric UMAP~\cite{sainburg2021parametric_umap}. However, solely training an MLP for this supervised task often leads to an under-constrained problem, resulting in solutions that memorize the data but fail to capture meaningful patterns or generalize effectively~\cite{zhang2016understanding, arpit2017closer}. Beyond implicit regularization techniques such as early stopping~\cite{bourlard1989early}, dropout~\cite{Wager2013dropout}, or layer-wise pre-training~\cite{Bengio2006pretraining}, multi-task learning~\cite{Caruana1997multitask} has been shown to improve generalization. Early studies demonstrated that jointly learning tasks reduces the number of required samples~\cite{baxter1995learning, baxter2000inductive_bias}, while later work introduced trace-norm regularization on the weights of a linear, single hidden-layer neural network for a set of tasks~\cite{Maurer2006multitask, pontil2013multitask}. Motivated by Le et al.~\cite{Le2018supervised_autoencoders}, who empirically showed that training neural networks to predict both target embeddings and inputs (reconstruction) improves generalization compared to encoder-only architectures, we focus on multi-task learning-based regularization in the context of regularized autoencoders.

AE networks are built of two parts: an encoder function \( f(\mathbf{x}_i) = \mathbf{z}_i \in \mathbb{R}^d \) (\( d \ll D \)), which compresses the input data into a latent representation via a bottleneck layer~\cite{theis2017lossy}, and a decoder function \( g(\mathbf{z}_i) = \hat{\mathbf{x}}_i \), which maps the low-dimensional embedding back to the original input space. 
To ensure that \(   (g\circ f)(\mathbf{x}_i) = \hat{\mathbf{x}}_i
\approx 
\mathbf{x}_i\),
AEs minimize the average reconstruction loss
$L(f, g) = \frac{1}{N} \sum_{i=1}^N L_{recon}(\mathbf{x}_i, \hat{\mathbf{x}}_i)$
where \( L_{recon}(\cdot, \cdot) \) is typically defined as the squared Euclidean distance.
AEs learn compact data representations that relate meaningfully to the input data. However, standard AEs often fail to capture the intrinsic geometry of the data and do not produce interpretable embeddings~\cite{duque2022geometry}. To address this, regularization techniques have been proposed to guide the optimization process toward better local minima in the latent space.

Structural Deep Network Embedding (SDNE)~\cite{Wang2016sdne} preserves both first- and second-order graph neighborhoods for graph-structured data by combining adjacency vector reconstruction with Laplacian Eigenmaps~\cite{belkin2001laplacian} regularization. Local and Global Graph Embedding Autoencoders~\cite{wang2020lgae} enforce two constraints on the embedding layer: a local constraint to cluster \( k \)-nearest neighbors and a global constraint to align data points with their class centers. VAE-SNE~\cite{graving2020vae-sne} integrates parametric $t$-SNE with variational AEs, enhancing global structure preservation while retaining $t$-SNE's strength in preserving local structure. GRAE~\cite{duque2022geometry} explicitly impose geometric consistency between the latent space and precomputed manifold embeddings.

Other approaches focus on regularizing the decoder. Inspired by Denoising Autoencoders~\cite{vincent2008extracting}, Generalized Autoencoders~\cite{Wang2014gae} minimize the weighted mean squared error between the reconstruction \( \hat{\mathbf{x}} \) and the \( k \)-nearest neighbors of the input \( \mathbf{x} \), where weights reflect the predefined similarities between \( \mathbf{x} \) and its neighbors. Centroid Encoders (CE)~\cite{Ghosh2022centroid} minimize within-class reconstruction variance to ensure that same-class samples are mapped close to their respective centroids. Self-Supervised Network Projection (SSNP)~\cite{espadoto2021ssnp} incorporates neighborhood information by jointly optimizing reconstruction and classification at the output layer, using existing labels or pseudo-labels generated through clustering. Neighborhood Reconstructing Autoencoders~\cite{LEE2021nrae} extend reconstruction tasks by incorporating the neighbors of \( \mathbf{x} \) alongside \( \mathbf{x} \) itself, using a local quadratic approximation of the decoder at \( f(\mathbf{x}) = \mathbf{z} \) to better capture the local geometry of the decoded manifold. Similarly, Geometric Autoencoders~\cite{Nazari2023geometric} introduce a regularization term in the reconstruction loss, leveraging the generalized Jacobian determinant computed at \( f(\mathbf{x}) = \mathbf{z} \) to mitigate local contractions and distortions in latent representations.

\subsection{Kernel methods for OOS extension}\label{subsec:kernel_methods}
Let $k(\cdot, \cdot)$ be a data-dependent symmetric positive definite kernel function $(\mathbf{x}, \mathbf{x}')\mapsto k(\mathbf{x}, \mathbf{x}')\geq 0$. For simplicity, we consider normalized kernel functions that satisfy the sum-to-one property $\sum_{i=1}^N k(\mathbf{x}, \mathbf{x}_i)=1$. Kernel methods for OOS extensions seek an embedding function $\mathbf{k}\mapsto f(\mathbf{k})=\mathbf{z}\in \mathbb{R}^d$ where the input
$
    \mathbf{k} =  \mathbf{k}_{\mathbf{x}} = \begin{bmatrix}
     k(\mathbf{x}, \mathbf{x}_1)  & \cdots &     
     k(\mathbf{x}, \mathbf{x}_N)
\end{bmatrix}
$
is an $N$-dimensional similarity vector representing pairwise proximities between any instance $\mathbf{x}$ and all the points in the training set $X$. Under the linear assumption \(f(\mathbf{k}) = \mathbf{k}\mathbf{W}\), where \(\mathbf{W} \in \mathbb{R}^{N \times d}\) is a projection matrix to be determined, we directly define \(\mathbf{W}\) in the context of a regression task~\cite{Gisbrecht2012oos, Gisbrecht2015kernel-tsne} by minimizing the least-squares error
\begin{equation}\label{eq:least-squares}
    \sum_{i=1}^N \|\mathbf{z}_i^G - f(\mathbf{k}_i)\|_2^2,
\end{equation}
which yields the explicit solution
$\mathbf{W} = \mathbf{K}^{-1}\mathbf{Y}$,
where \(\mathbf{K}^{-1}\) refers to the pseudo-inverse of the training Gram matrix $\mathbf{K}=[k(\mathbf{x}_i, \mathbf{x}_j)]_{1\leq i,j \leq N}$, and \(\mathbf{Y} = \begin{bmatrix}
    \mathbf{z}_1^G & \cdots & \mathbf{z}_N^G
\end{bmatrix}^T\) contains the precomputed training manifold embeddings. In particular, for manifold learning algorithms that directly assign the low-dimensional coordinates from the eigenvectors of $\mathbf{K}$, e.g., Laplacian Eigenmaps~\cite{belkin2001laplacian}, we have the well-known Nyström formula \(\mathbf{W} = \mathbf{U}\mathbf{\Lambda}^{-1}\)~\cite{bengio2003out, Arias2007connecting, Chen2013sparse}, where \(\mathbf{\Lambda}^{-1} = \text{diag}\left(  \lambda_1^{-1}, \hdots, \lambda_d^{-1} \right)\). Here, \(\lambda_i\) are the $d$ largest (or smallest, depending on the method) eigenvalues of $\mathbf{K}$, and \(\mathbf{U}\) is the matrix whose columns are the corresponding eigenvectors. In Locally Linear Embedding~\cite{saul2003think} and PHATE~\cite{Moon2019phate}, the authors suggested a default OOS extension through linear reconstruction $\mathbf{W}=\begin{bmatrix}
    \mathbf{z}_1^G & \cdots & \mathbf{z}_N^G
\end{bmatrix}^T$. In diffusion-based methods, this provides an alternative means to compress the diffusion process through the training landmarks and has been shown to outperform a direct application of the Nyström extension to diffusion maps~\cite{gigante2019compressed}.

Unlike parametric extensions discussed in Section~\ref{subsec:multitask_autoencoders}, kernel extensions learn an explicit embedding function using kernel representations rather than the original representations in the feature space. While kernel methods are powerful for handling high-dimensional datasets, they require computing pairwise kernels for all points in the training set, which can become computationally expensive for large datasets. In such cases, feature mappings offer greater scalability. However, kernel extensions have been shown to outperform direct parametric methods in unsupervised OOS visualization and classification tasks~\cite{Gisbrecht2015kernel-tsne, Ran2024kumap}. Additionally, in supervised visualization, a carefully chosen kernel mapping can effectively filter out irrelevant features, whereas feature mappings treat all features equally, potentially increasing sensitivity to noisy datasets. Therefore, we align our OOS extension framework with kernel extensions rather than direct parametric methods to stabilize supervised manifold learning, as empirically demonstrated in Appendix~\ref{sec:kernel_vs_feature} on an artificial tree (Appendix~\ref{sec:artificial_tree}).

\section{Methods}
\label{sec:methods}

While traditional kernel extensions have demonstrated computational benefits, they are limited to linear kernel mappings and are primarily designed for unsupervised data visualization or classification using either unsupervised or class-conditional kernel functions~\cite{Gisbrecht2015kernel-tsne, Ran2024kumap}. In this work, we expand the search space of the standard least-squares minimization problem in (\ref{eq:least-squares}) to include general, potentially nonlinear kernel mapping functions \(f\). We also propose a supervised kernel mapping based on Random Forests, specifically tailored for supervised data visualization. Building on the previous Section~\ref{subsec:multitask_autoencoders}, we add a geometrically motivated regularizer to this regression task within a multi-task autoencoder framework.

In this section, we elaborate on the methodology related to our RF-AE framework to extend any kernel-based dimensionality reduction method with Random Forests and autoencoders for supervised data visualization. Specifically, we explain how we combined RF-GAP proximities and the visualization strengths of RF-PHATE with the flexibility of autoencoders to develop a new parametric supervised embedding method, Random Forest Autoencoders (RF-AE). Additionally, we define our evaluation metrics and provide a detailed description of our experimental setup, including the models and datasets used in our experiments and our hyperparameter selection.

\subsection{Extended RF-GAP kernel function}\label{subsec:oosRFGAP}

The RF-GAP proximity~\cite{rhodes2023geometry} between (possibly unseen) instance $\mathbf{x}_i$ and training instance $\mathbf{x}_j$ is 
\begin{equation*}
    p(\mathbf{x}_i, \mathbf{x}_j)=\dfrac{1}{\left|S_{i}\right|} \sum_{t \in S_{i}}  \frac{c_j(t) \cdot I\left(j \in J_{i}(t)\right)}{\left|M_{i}(t)\right|},
\end{equation*}
where $S_i$ denotes the set of out-of-bag trees for observation $\mathbf{x}_i$, $c_j(t)$ is the number of in-bag repetitions for observation $\mathbf{x}_j$ in tree $t$, $I(\cdot)$ is the indicator function, $J_i(t)$ is the set of in-bag points residing in the terminal node of observation $\mathbf{x}_i$ in tree $t$, and $M_i(t)$ is the multiset of in-bag observation indices, including repetitions, co-occurring in a terminal node with $\mathbf{x}_i$ in tree $t$. Note that this definition naturally extends to OOS observations $\mathbf{x}_o \notin X$, which can be treated as out-of-bag for all trees. However, this definition requires that self-similarity be zero, that is, $p(\mathbf{x}_i, \mathbf{x}_i) = 0$. This is not suitable as a similarity measure in some applications. Due to the scale of the proximities---the rows sum to one~\cite{rhodes2023geometry}, so the proximity values are all near zero for larger datasets---, it is not practical to re-assign self-similarities to one. Otherwise, self-similarity would carry equal weight to the combined significance of all other similarities. Instead, we assign values by, in essence, passing down an identical OOB point to all trees where the given observation is in-bag. That is, we define self-similarity as
\begin{equation*}
    p(\mathbf{x}_i, \mathbf{x}_i)=\dfrac{1}{\left|\bar{S}_{i}\right|} \sum_{t \in \bar{S}_{i}}  \frac{c_i(t)}{\left|M_{i}(t)\right|},
\end{equation*}
where $\left|\bar{S}_{i}\right|$ is the set of trees for which $\mathbf{x}_i$ is in-bag. Under this formulation, $p(\mathbf{x}_i, \mathbf{x}_i)$ is on a scale more similar to other proximity values, and Proposition ~\ref{thm:maximality} (Appendix~\ref{sec:proof_maximality}) guarantees that, on average, $p(\mathbf{x}_i, \mathbf{x}_i) > p(\mathbf{x}_i, \mathbf{x}_j)$. Now, we define the row-normalized RF-GAP similarity between a pair of training instances $\mathbf{x}_i$ and $\mathbf{x}_j$ as
\begin{equation}
    \tilde{p}(\mathbf{x}_i, \mathbf{x}_j) = \frac{p'(\mathbf{x}_i, \mathbf{x}_j)}{\sum_{j=1}^{N} p'(\mathbf{x}_i, \mathbf{x}_j)}
    \label{eq:proximity_final}
\end{equation}
where $p'(\mathbf{x}_i, \mathbf{x}_j)=(p(\mathbf{x}_i, \mathbf{x}_j) + p(\mathbf{x}_j, \mathbf{x}_i))/2$. In this definition, we intentionally symmetrized the original RF-GAP similarities through $p'(\cdot, \cdot)$ to use them as input for our prototype selection approach in Section~\ref{subsec:prototype_selection} and applied row-normalization to restore the sum to one property and refocus on the underlying geometry rather than sample distribution.

\subsection{RF-AE architecture}
\label{subsec:rfae_arch}
\begin{figure}[ht]
    \centering
    \includegraphics[width = 0.8\textwidth]{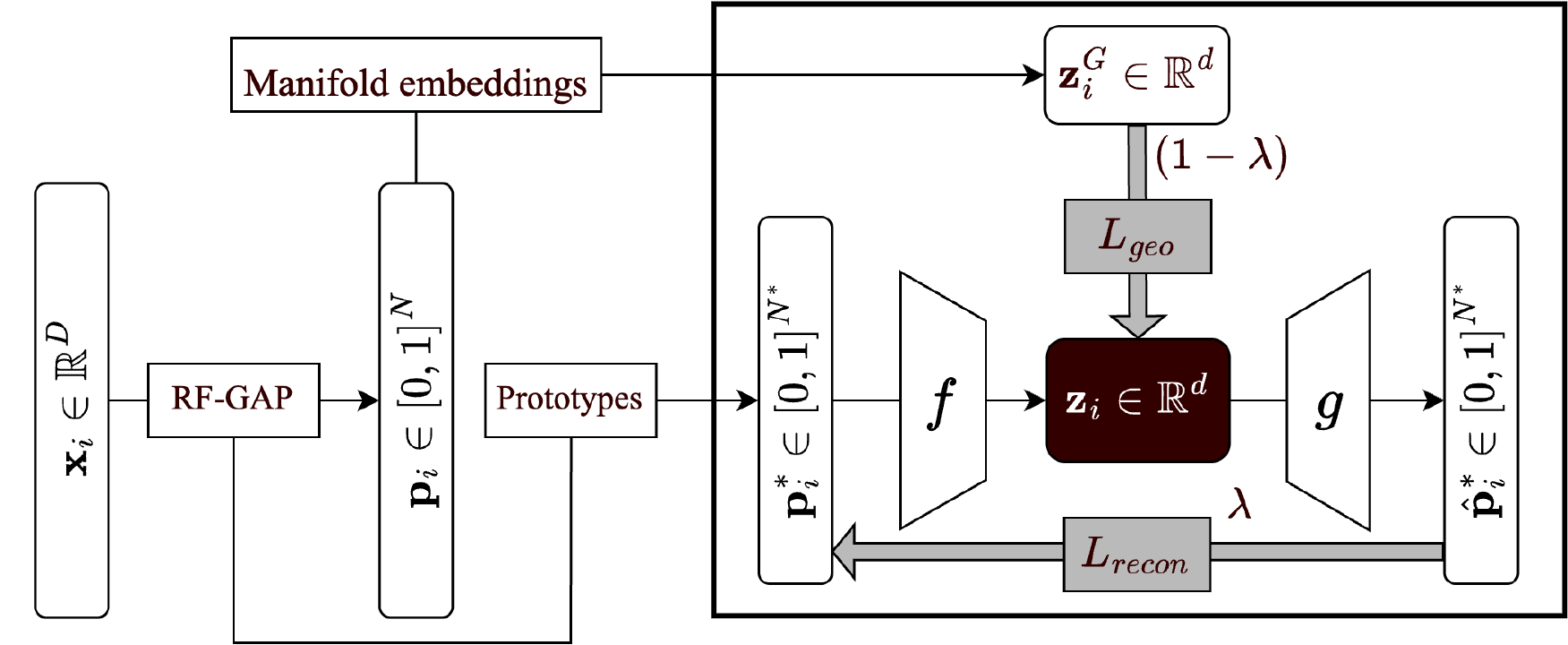}
    \caption{RF-AE architecture with prototype selection and geometric regularization. First, the original feature vectors $\mathbf{x}_i$ are transformed into one-step transition probability vectors $\mathbf{p}_i$ derived from RF-GAP proximities (Section~\ref{subsec:oosRFGAP}). They are further reduced into lower-dimensional vectors $\mathbf{p}^*_i$ that represent transition probabilities to $N^*\ll N$ selected prototypes (Section~\ref{subsec:prototype_selection}). Meanwhile, manifold embeddings $\mathbf{z}_i^G$ are generated using RF-PHATE from the $\mathbf{p}_i$. Finally, $\mathbf{p}^*_i$ and $\mathbf{z}_i^G$ serve as input to the network within the enclosing box, training an encoder $f$ and a decoder $g$ by simultaneously minimizing the reconstruction loss $L_{recon}$ and the geometric loss $L_{geo}$ defined in Section~\ref{subsec:rfae_arch}.}
    \label{fig:rfae_arch}
\end{figure}

To leverage the knowledge gained from an RF model, we modify the traditional AE architecture to incorporate the RF's learning. The forest-generated proximity measures~\cite{rhodes2023geometry}, which indicate similarities between data points relative to the supervised task, serve as a foundation for extending the embedding while integrating the insights acquired through the RF's learning process. In  RF-AE, the original input vectors $\mathbf{x}_i\in \mathbb{R}^D$ used in the vanilla AE are now replaced with the symmetric, row-normalized RF-GAP proximity vector between training instance $\mathbf{x}_i$ and all the other training instances, including itself. That is, each input $\mathbf{x}_i$ used for training is now represented as an $N$-dimensional vector $\mathbf{p}_i$ encoding local-to-global supervised neighbourhood information around $\mathbf{x}_i$, defined using Eq.~\ref{eq:proximity_final}:
\begin{equation*}
  \mathbf{p}_i = \begin{bmatrix}
    \tilde{p}(\mathbf{x}_i, \mathbf{x}_1) & \cdots & \tilde{p}(\mathbf{x}_i, \mathbf{x}_N)\end{bmatrix}\in [0,1]^N.  
\end{equation*}
    
Since its elements sum to one, $\mathbf{p}_i$ contains one-step transition probabilities from training observation with index $i$ to its supervised neighbors indexed $j=1,\hdots,N$ derived from the RF-GAP proximities. Thus, the encoder $f(\mathbf{p}_i)=\mathbf{z}_i\in \mathbb{R}^d$ and decoder $g(\mathbf{z}_i)=\hat{\mathbf{p}}_i$ of the unconstrained RF-AE network are trained through stochastic gradient descent by minimizing the reconstruction loss $L(f, g) = \frac{1}{N}\sum_{i=1}^N L_{recon}(\mathbf{p}_i, \hat{\mathbf{p}}_i)$. Given a learned set of low-dimensional manifold embeddings $G=\{\mathbf{z}^G_i\in \mathbb{R}^d\mid i=1,\hdots, N\}$ (e.g. obtained from RF-PHATE), we additionally force the RF-AE to learn a latent representation $\mathbf{z}_i$ similar to its precomputed counterpart $\mathbf{z}_i^G$ via an explicit geometric constraint to the bottleneck layer, similar to GRAE~\cite{duque2022geometry}. This translates into an added term in the loss formulation, which now takes the form:
\begin{equation*}
    L(f, g) = \frac{1}{N} \sum_{i=1}^N \Big[ 
    \lambda L_{recon}(\mathbf{p}_i, \hat{\mathbf{p}}_i) 
    + (1 - \lambda) L_{geo}(\mathbf{z}_i, \mathbf{z}_i^G) 
\Big].
\end{equation*}
The parameter $\lambda \in [0,1]$ controls the degree to which the precomputed embedding is used in encoding $\mathbf{x}_i$: $\lambda = 1$ is our vanilla RF-AE model without geometric regularization, while $\lambda=0$ reproduces $\mathbf{z}_i^G$ as in the standard kernel mapping formulation. 
We use standard Euclidean distance for the geometric loss to align with the least-squares formulation in Eq. (\ref{eq:least-squares}). While one could define the reconstruction loss as the squared Euclidean distance between input vectors $\mathbf{p}_i$ and their reconstructions, this biases learning toward zero-valued entries, which dominate in large datasets but carry little structural meaning. In contrast, nonzero entries reflect meaningful links in the RF-GAP graph. Although re-weighting the loss to emphasize nonzeros is possible~\cite{Wang2016sdne}, it introduces extra hyperparameters. Instead, we treat $\mathbf{p}_i$ and its reconstruction $\hat{\mathbf{p}}_i = (g\circ f)(\mathbf{p}_i)$ as probability distributions and use the Jensen-Shannon Divergence (JSD)~\cite{lin1991divergence} as the reconstruction loss:
\begin{equation*}
L_{recon}(\mathbf{p}_i, \hat{\mathbf{p}}_i) = \mathrm{JSD}(\mathbf{p}_i \,\|\,\hat{\mathbf{p}}_i), \qquad
L_{geo}(\mathbf{z}_i,\mathbf{z}_i^G) = \| \mathbf{z}_i - \mathbf{z}_i^G \|_2^2.
\end{equation*}
The JSD promotes latent representations that reconstruct both local and global RF-GAP neighborhoods~\cite{Im_Verma_Branson_2018}. In this work, we set the latent dimension $d=2$ to emphasize on visual interpretability. We use RF-PHATE as the geometric constraint due to its effectiveness in supervised data visualization~\cite{rhodes2021rfphate, rhodes2024gaining}, although any dimensionality reduction method can be extended this way. Moreover, as RF-PHATE already encodes multiscale information, combining it with JSD reconstruction further guides learning toward geometrically meaningful representations while supporting global consistency. Refer to Fig.~\ref{fig:rfae_arch} for a comprehensive illustration of our RF-AE architecture.

\subsection{Input dimensionality reduction with class-wise prototype selection}\label{subsec:prototype_selection}
The input dimensionality of our RF-AE architecture scales with the training size $N$, which may cause memory issues during GPU-optimized training when dealing with large training sets. Thus we further reduce the input dimensionality of $\mathbf{p}_i$ from $N$ to $N^*\ll N$ by selecting $N^*$ prototypes. The prototypes are selected using uniform
class-wise $k$-medoids~\cite{gomes2010budgeted, Tan_Soloviev_Hooker_Wells_2020} on the induced RF-GAP training dissimilarities. First, we max-normalize the symmetrized RF-GAP proximities to form the symmetric dissimilarity matrix $\begin{bmatrix}
    \max_{u,v}\left\{p'(\mathbf{x}_u, \mathbf{x}_v)\right\} - p'(\mathbf{x}_i, \mathbf{x}_j)
\end{bmatrix}\in [0,1]^{N\times N}$. Then, for a dataset with $q$ classes, we find $k= N^*/q$-medoids for each class using their corresponding RF-GAP dissimilarities as input to FasterPAM~\cite{schubert2019faster, schubert2021fast}.
Let $\mathfrak{M}=\{\mathfrak{m}_1, \ldots, \mathfrak{m}_{N^*}\}$ denote the resulting set of medoid indices. Then instead of using RF-GAP transition probabilities from any point $i$ to every training point $j$ as before, we form RF-GAP transition probabilities from any point $i$ to each prototype $j\in\mathfrak{M}$ as 
\begin{equation*}
    \mathbf{p}^*_i = \begin{bmatrix}
    \tilde{p}^*(\mathbf{x}_i, \mathbf{x}_{\mathfrak{m}_1}) & \cdots & \tilde{p}^*(\mathbf{x}_i, \mathbf{x}_{\mathfrak{m}_{N^*}}) 
    \end{bmatrix}, \quad
    \tilde{p}^*(\mathbf{x}_i, \mathbf{x}_j) = \frac{p'(\mathbf{x}_i, \mathbf{x}_j)}{\sum_{j \in \mathfrak{M}} p'(\mathbf{x}_i, \mathbf{x}_j)}.
\end{equation*}
Fig.~\ref{fig:rfae_arch} contextualizes this prototype selection mechanism within our RF-AE architecture. We also note that using prototypes allows for faster OOS projections since we no longer need to compute RF-GAP proximities to all training points.



\subsection{Quantifying supervised OOS embedding fit}\label{subsec:quantify_oos_embedding}

Beyond standard $k$-NN accuracy~\cite{maaten2009parametric_tsne, sainburg2021parametric_umap, Ghosh2022centroid, goldberber2004nca, wang2021understanding}, which evaluates class separability in the embedding space, it is equally important to assess how well the embedding preserves the structure of informative features. Without this, class-conditional methods that artificially inflate separation may be favored, even if they distort meaningful feature–label relationships. Conversely, purely unsupervised criteria—such as neighbor preservation~\cite{sainburg2021parametric_umap} or global distance correlation~\cite{Kobak2019tsne}—can undervalue supervised models that discard irrelevant features aligned with the classification task.

Inspired by Rhodes et al.~\cite{rhodes2024gaining}, we formalize \textit{structural importance alignment}, which quantifies the correlation between feature importances for classification and for structure preservation. Given a training/test split \(X = X_{\mathrm{train}} \cup X_{\mathrm{test}}\) with labels \(Y = Y_{\mathrm{train}} \cup Y_{\mathrm{test}}\), and embeddings
$
f_{\mathrm{emb}}(X) = f_{\mathrm{emb}}(X_{\mathrm{train}}) \cup f_{\mathrm{emb}}(X_{\mathrm{test}}) =Z_{\mathrm{train}} \cup Z_{\mathrm{test}}
$ from a trained encoder $f_{\mathrm{emb}}$, we define test–train distance matrices in the original and embedded spaces as:
\[
\mathbf{D}_{\mathrm{test}}[i, j] = \| \mathbf{x}_i^{\mathrm{test}} - \mathbf{x}_j^{\mathrm{train}} \|_2, \quad
\mathbf{D}_{\mathrm{test}}^{\mathrm{emb}}[i, j] = \| \mathbf{z}_i^{\mathrm{test}} - \mathbf{z}_j^{\mathrm{train}} \|_2, \quad
\mathbf{D}_{\mathrm{test}}, \mathbf{D}_{\mathrm{test}}^{\mathrm{emb}} \in \mathbb{R}_{+}^{N_{\mathrm{test}} \times N_{\mathrm{train}}}.
\]

\textbf{Classification importances} are computed using a user-defined classifier \(f_{\mathrm{cls}} : \mathbb{R}^D \to \mathcal{Y}\) trained on \(X_{\mathrm{train}}\). Let \(\mathrm{acc}_{f_{\mathrm{cls}}}(X_{\mathrm{test}}, Y_{\mathrm{test}})\) denote its accuracy on the test set. Then, the importance of feature \(i\) is:
\[
\mathcal{C}_i = \mathrm{acc}_{f_{\mathrm{cls}}}(X_{\mathrm{test}}, Y_{\mathrm{test}}) - \mathrm{acc}_{f_{\mathrm{cls}}}(\widetilde{X}_{\mathrm{test}}^{(i)}, Y_{\mathrm{test}}),
\]
where \(\widetilde{X}_{\mathrm{test}}^{(i)}\) is the perturbed test set in which feature \(i\) and its correlated features are permuted across samples (see Algorithm~\ref{alg:sampling} in Appendix~\ref{sec:sampling}).

\textbf{Structural importances} are computed using an unsupervised \emph{structure preservation score} \( s(\cdot, \cdot) \) that quantifies how well an embedding preserves pairwise relationships from the original space. Higher scores indicate better preservation of structure. We consider several commonly used definitions of \(s\), including local scores \(s \in \{\textit{QNX}, \textit{Trust}\}\) and global scores \(s \in \{\textit{Spear}, \textit{Pearson}\}\)~\cite{sainburg2021parametric_umap, Kobak2019tsne, kobak2019heavy, venna2006local, Gildenblat_Pahnke_2025}. Full definitions are provided in Appendix~\ref{subsec:def_structure_preservation}.

Given a test set \( \mathbf{D}_{\mathrm{test}} \) and its embedding \( \mathbf{D}_{\mathrm{test}}^{\mathrm{emb}} \), the importance of feature \(i\) is then defined as:
\[
\mathcal{S}_i = s(\mathbf{D}_{\mathrm{test}}, \mathbf{D}_{\mathrm{test}}^{\mathrm{emb}}) - s(\widetilde{\mathbf{D}}_{\mathrm{test}}^{(i)}, \mathbf{D}_{\mathrm{test}}^{\mathrm{emb}}),
\]
where \(\widetilde{\mathbf{D}}_{\mathrm{test}}^{(i)}\) is the perturbed distance matrix obtained by replacing feature \(i\) in \(X_{\mathrm{test}}\) with noise (Algorithm ~\ref{alg:sampling}), while holding \(X_{\mathrm{train}}\) fixed. A larger drop in \(s\) indicates that the OOS embedding relies more heavily on the structure induced by feature \(i\).

To assess whether the embedding structure supports classification-relevant features, we compute the alignment between structural importances \(\mathcal{S} = \{ \mathcal{S}_1, \ldots, \mathcal{S}_D \}\) and classification importances \(\mathcal{C} = \{ \mathcal{C}_1, \ldots, \mathcal{C}_D \}\) using the Kendall rank correlation coefficient \(\tau(\mathcal{C}, \mathcal{S}) \in [-1, 1]\)~\cite{Kendall1938}. Higher values indicate that the embedding prioritizes features most relevant to the classification task. Fig.~\ref{fig:sia_illustration} (Appendix~\ref{sec:sia_illustration}) illustrates this Structural Importance Alignment (SIA) framework.

Note that SIA depends on both the choice of classifier \( f_{\mathrm{cls}} \) and structure score \( s \). For \( f_{\mathrm{cls}} \), we use an ensemble with equal-weight majority voting across \(k\)-NN, SVM, and MLP classifiers to reduce model-specific bias (see Appendix~\ref{sec:rfae_exp_setting} for hyperparameters). Each dataset achieves at least 60\% accuracy (Appendix~\ref{sec:baseline_cls_perf}). For \(s\), we report four variants of SIA based on the chosen structure score, capturing both local and global structure preservation.

\section{Results}

\subsection{RF-AE balances structural importance alignment and class separability}\label{subsec:quant_comp}

We assessed the trade-off between SIA and $k$-NN classification accuracy achieved by RF-AE against several baseline methods across 20 datasets spanning diverse domains. Each dataset contained a minimum of 1,000 samples and at least 10 features. Training and OOS embeddings were generated using an 80/20 stratified train/test split, except for Isolet, Landsat Satellite, Optical Digits, USPS, HAR, OrganC MNIST and Blood MNIST, where predefined splits were used. We applied min-max normalization to the input features prior to training and inference. 
Detailed descriptions of the datasets are provided in Appendix~\ref{sec:data}.

Table~\ref{tab:quantitative_comp} shows average local SIA and $k$-NN accuracy scores across 20 datasets and 10 repetitions. We report separate local ($s=\textit{QNX},\textit{Trust}$) and global ($s=\textit{Spear},\textit{Pearson}$) SIA scores. Accuracy is averaged over $k = 5$ to $\sqrt{N_{\mathrm{train}}}$ (in steps of 10) to better reflect global class separability and penalize class fragmentation. We compared RF-AE with \( \lambda = 0.01 \) and $N^*=0.1 N_{\mathrm{train}}$ to 13 baselines, including the default RF-PHATE linear kernel extension~\cite{Moon2019phate} (Section~\ref{subsec:kernel_methods}), vanilla AE, principal component analysis (PCA), supervised PCA, parametric \( t \)-SNE (P-TSNE ~\cite{maaten2009parametric_tsne, damrich2023from}), parametric UMAP (P-UMAP~\cite{sainburg2021parametric_umap, damrich2023from}), parametric supervised UMAP (P-SUMAP~\cite{sainburg2021parametric_umap}), pairwise controlled manifold approximation projection (PACMAP~\cite{wang2021understanding}), CE~\cite{Ghosh2022centroid}, CEBRA~\cite{Schneider2023cebra}, SSNP~\cite{espadoto2021ssnp} using ground-truth labels, neighborhood component analysis (NCA~\cite{goldberber2004nca}), and partial least squares discriminant analysis (PLS-DA~\cite{gottfries1995diagnosis, barker2003partial}). All externally sourced methods were run using their default hyperparameter settings, as specified in the original implementations. See Appendix~\ref{subsec:exp_setting} for full experimental details. The compute resources required for the experiments include a GPU with at least 40 GB of memory and a CPU with 128 GB of RAM, further details are provided in Appendix~\ref{subsec:exp-compute-resource}.

\begin{table}[ht]
\caption{Local ($s=\textit{QNX},\textit{Trust}$) and global ($s=\textit{Spear},\textit{Pearson}$) SIA scores, along with test $k$-NN accuracies for our RF-AE method and 13 baselines. Scores are shown as mean $\pm$ std across 20 datasets and 10 repetitions. Methods are sorted according to accuracy. Top three scores in each metric are highlighted with underlined bold italics (first), bold italics (second), and italics (third). In the case of ties, methods are further ranked by their standard deviations. Supervised methods are marked by an asterisk.}
\label{tab:quantitative_comp}

\centering
\small
\begin{sc}
\begin{tabular}{lccccc}
\toprule
 & \multicolumn{2}{c}{Local SIA} & \multicolumn{2}{c}{Global SIA} &  \\
 \cmidrule(r){2-5}
 & QNX & Trust & Spear & Pearson & \multicolumn{1}{c}{$k$-NN Acc} \\
\midrule
RF-AE* & \textbf{\textit{0.800 ± 0.025}} & \textbf{\textit{0.818 ± 0.025}} & \underline{\textbf{\textit{0.776 ± 0.051}}} & \underline{\textbf{\textit{0.776 ± 0.050}}} & \underline{\textbf{\textit{0.831 ± 0.011}}} \\
SSNP* & 0.760 ± 0.047 & 0.772 ± 0.045 & 0.685 ± 0.089 & 0.694 ± 0.080 & \textbf{\textit{0.809 ± 0.030}} \\
P-SUMAP* & 0.756 ± 0.028 & 0.768 ± 0.025 & 0.647 ± 0.048 & 0.647 ± 0.048 & \textit{0.797 ± 0.011} \\
CE* & 0.795 ± 0.050 & \textit{0.818 ± 0.051} & \textit{0.765 ± 0.051} & \textbf{\textit{0.763 ± 0.054}} & 0.797 ± 0.043 \\
RF-PHATE* & \textit{0.798 ± 0.027} & \underline{\textbf{\textit{0.823 ± 0.025}}} & 0.749 ± 0.040 & 0.750 ± 0.043 & 0.794 ± 0.009 \\
NCA* & \underline{\textbf{\textit{0.808 ± 0.027}}} & 0.805 ± 0.025 & \textbf{\textit{0.771 ± 0.032}} & \textit{0.759 ± 0.033} & 0.760 ± 0.007 \\
PACMAP & 0.749 ± 0.026 & 0.758 ± 0.025 & 0.688 ± 0.029 & 0.688 ± 0.029 & 0.743 ± 0.011 \\
P-TSNE & 0.743 ± 0.028 & 0.747 ± 0.028 & 0.684 ± 0.036 & 0.666 ± 0.038 & 0.712 ± 0.012 \\
AE & 0.744 ± 0.027 & 0.751 ± 0.029 & 0.695 ± 0.044 & 0.655 ± 0.053 & 0.700 ± 0.018 \\
P-UMAP & 0.757 ± 0.027 & 0.744 ± 0.028 & 0.674 ± 0.035 & 0.657 ± 0.038 & 0.655 ± 0.022 \\
SPCA* & 0.767 ± 0.026 & 0.759 ± 0.030 & 0.741 ± 0.031 & 0.738 ± 0.032 & 0.624 ± 0.009 \\
PLS-DA* & 0.715 ± 0.026 & 0.708 ± 0.028 & 0.659 ± 0.027 & 0.639 ± 0.028 & 0.592 ± 0.009 \\
CEBRA* & 0.780 ± 0.045 & 0.775 ± 0.050 & 0.735 ± 0.062 & 0.728 ± 0.068 & 0.582 ± 0.040 \\
PCA & 0.745 ± 0.027 & 0.742 ± 0.026 & 0.733 ± 0.027 & 0.727 ± 0.028 & 0.563 ± 0.009 \\
\bottomrule
\end{tabular}
\end{sc}
\end{table}

Unsurprisingly, unsupervised methods generally rank lower than supervised approaches in terms of $k$-NN classification accuracy. However, even among high-accuracy models such as SSNP and P-SUMAP, we observe a notable drop in local and global SIA scores. This suggests an overemphasis on class separability at the expense of preserving the underlying supervised structure—an indication of structural distortion. Unsupervised methods, on the other hand, also struggle with SIA metrics, which is expected given their objective to preserve unsupervised pairwise similarities that may be influenced by irrelevant or noisy features. In contrast, RF-AE achieves the highest $k$-NN accuracy by a substantial margin, while consistently ranking in the top two across both local and global SIA scores. This demonstrates RF-AE’s ability to not only ensure class separation but also preserve meaningful supervised relationships, effectively emphasizing the most relevant features for the classification task.

RF-AE maintains strong performance across all evaluation metrics when varying $\lambda$, as shown in Table~\ref{tab:ablation_lam}. 
Additional ablation studies (Appendix~\ref{sec:ablation_supplemental}) confirm that RF-AE’s performance is robust to both $\lambda$ and prototype count $N^*$. Its superiority in SIA also persists under varying classification importance strategies (Appendix~\ref{sec:sia_varying_imp}), reflecting better alignment with the ground-truth feature importance hierarchy.

\begin{table}[ht]
\caption{Comparison of SIA scores and $k$-NN accuracy across different $\lambda$ values for RF-AE. Each score is compared with baseline models in Table~\ref{tab:quantitative_comp}, and highlighted only if it ranks among the top three overall. Top three values per metric are highlighted using underlined bold italics (first), bold italics (second), and italics (third).}
\label{tab:ablation_lam}

\centering
\small
\begin{sc}
\begin{tabular}{lccccc}
\toprule
 & \multicolumn{2}{c}{Local SIA} & \multicolumn{2}{c}{Global SIA} &  \\
 \cmidrule(r){2-5}
 & QNX & Trust & Spear & Pearson & \multicolumn{1}{c}{$k$-NN Acc} \\
\midrule
$\lambda = 0$ & \textbf{\textit{0.803 ± 0.027}} & \textit{0.817 ± 0.025} & 0.747 ± 0.038 & 0.749 ± 0.042 & \textbf{\textit{0.806 ± 0.010}} \\
$\lambda = 0.001$ & \textbf{\textit{0.801 ± 0.025}} & \textbf{\textit{0.819 ± 0.026}} & \textit{0.763 ± 0.040} & \textbf{\textit{0.760 ± 0.044}} & \underline{\textbf{\textit{0.827 ± 0.010}}} \\
$\lambda = 0.01$ & \textbf{\textit{0.800 ± 0.025}} & \textbf{\textit{0.818 ± 0.025}} & \underline{\textbf{\textit{0.776 ± 0.051}}} & \underline{\textbf{\textit{0.776 ± 0.050}}} & \underline{\textbf{\textit{0.831 ± 0.011}}} \\
$\lambda = 0.1$ & \textbf{\textit{0.798 ± 0.026}} & \textit{0.817 ± 0.025} & \underline{\textbf{\textit{0.785 ± 0.045}}} & \underline{\textbf{\textit{0.788 ± 0.045}}} & \underline{\textbf{\textit{0.831 ± 0.011}}} \\
$\lambda = 1$ & \textbf{\textit{0.799 ± 0.026}} & \textit{0.816 ± 0.026} & 0.701 ± 0.105 & 0.700 ± 0.108 & \underline{\textbf{\textit{0.837 ± 0.011}}} \\
\bottomrule
\end{tabular}
\end{sc}
\end{table}

\subsection{Qualitative comparison through OOS visualizations}\label{subsec:oos_viz}
We qualitatively assessed the capability of four methods to embed OOS instances on Sign MNIST (A--K) and OrganC MNIST dataset. Each model was trained on the training subset, and we subsequently mapped the test set with the learned encoder. Fig.~\ref{fig:viz_main} depicts the resulting visualizations.

\begin{figure*}[ht]
    \centering
    \includegraphics[width = 1\textwidth]{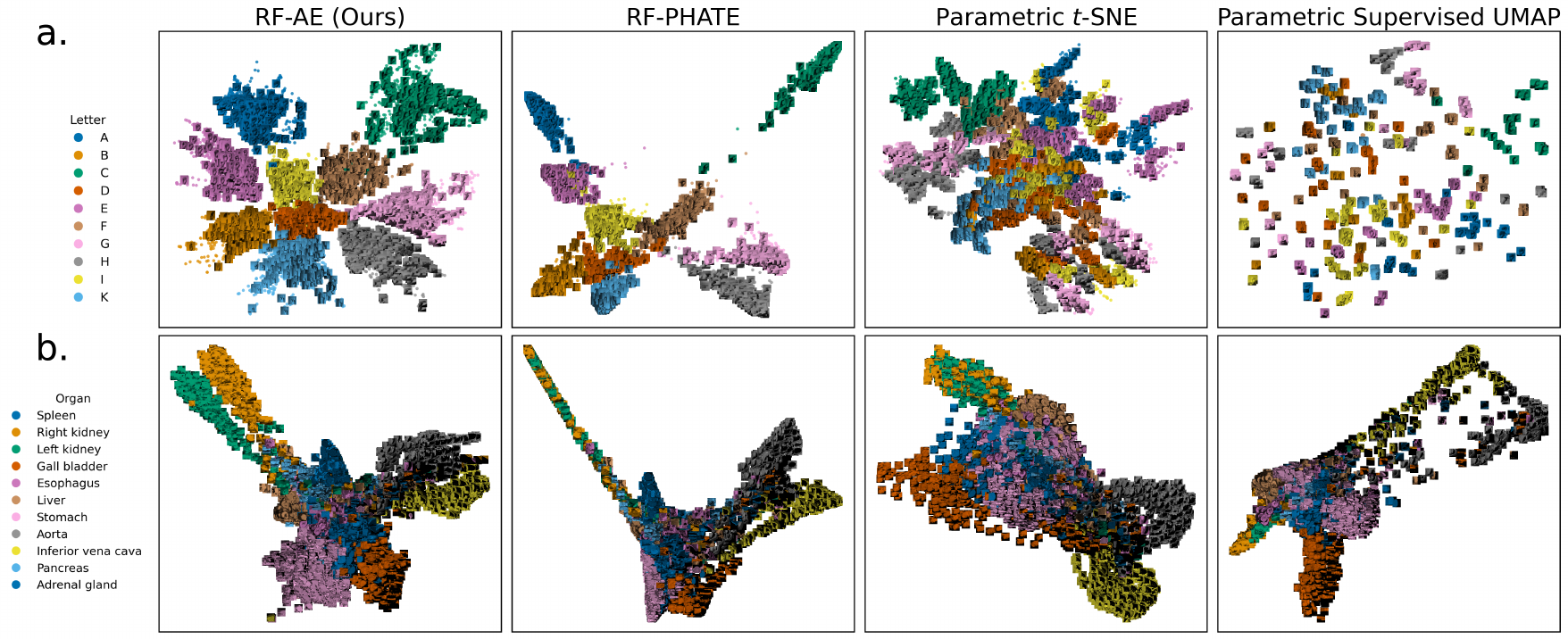}
    \caption{OOS visualization using four different dimensionality reduction methods. Training points are colored by labels, and test points are depicted with original images. Training points are omitted in OrganC MNIST for clarity.
    \textbf{a.} Sign MNIST (A--K) dataset (Table~\ref{tab:rfae_data}): RF-AE captures supervised relationships by preserving class-specific variations, such as shadowing and hand orientation, while also highlighting inter-class similarities and maintaining clear class separability. The default RF-PHATE's kernel extension compresses clusters excessively. Parametric $t$-SNE and parametric supervised UMAP demonstrate sensitivity to irrelevant features. \textbf{b.} OrganC MNIST dataset: RF-AE clearly separates similar organ types while preserving their anatomical proximity, capturing both class identity and biological relevance. Other methods tend to merge these classes or distort their relationships, failing to reflect fine-grained anatomical distinctions. 
    }
    \label{fig:viz_main}
\vskip -0.1in
\end{figure*}

From the Sign MNIST (A--K) plot in Fig.~\ref{fig:viz_main}a, RF-AE retains the overall shape of the RF-PHATE embedding while providing finer within-class detail. In contrast, RF-PHATE's default kernel extension compresses class clusters, making local relationships harder to discern. RF-AE expands these clusters, revealing within-class patterns, such as the logical transition between variations in shadowing and hand orientation to represent the letter ``C'' (top-right cluster). This detail is lost in RF-PHATE's default kernel extension, which over-relies on diffusion and smooths out local differences. P-TSNE captures local structure but frequently fragments same-class samples based on irrelevant background variations—for example, grouping shadowed “G” and “H” instances together (bottom cluster) while separating them from unshadowed counterparts (far-left cluster). RF-AE avoids this issue, distinguishing within-class variations while preserving the class-specific clusters. P-SUMAP is also sensitive to irrelevant features and tends to artificially cluster neighboring points of the same class, leading to a sparse and fragmented embedding.


From the OrganC MNIST plot in Fig.~\ref{fig:viz_main}b, RF-AE yields the most structured and interpretable embedding, forming well-separated clusters while preserving anatomical proximity—e.g., between the left and right kidneys, or among the stomach, liver, and pancreas. RF-PHATE captures smooth global transitions but merges nearby classes like the kidneys and inferior vena cava, reducing local separability. P-TSNE maintains compact local clusters but distorts global structure, leading to fragmented class placement. P-SUMAP, despite higher $k$-NN accuracy than P-TSNE, has the lowest spatial interpretability, with elongated projections and entangled anatomical relationships that obscure class transitions.

This qualitative analysis of the Sign MNIST and OrganC MNIST dataset underscores the importance of regularization and methodological choices in creating meaningful embeddings for supervised tasks. Our RF-AE architecture, enhanced with RF-PHATE regularization, effectively preserves both local and global structures, outperforming existing methods.
The visualizations and analyses of the other models, along with their quantitative comparisons, are provided in Appendix~\ref{sec:oos_viz_supplemental}. 

\section{Discussion}
\label{sec:discussion}
The significance of supervised dimensionality reduction lies in its ability to reveal meaningful relationships between features and labels. As shown  by Rhodes et al.~\cite{rhodes2024gaining}, RF-PHATE stands out as a strong solution for supervised data visualization. However, it lacks an embedding function for OOS extension. To address this limitation, we designed Random Forest Autoencoders (RF-AE), an autoencoder-based architecture that reconstructs RF-GAP neighborhoods while preserving the supervised geometry captured by precomputed RF-PHATE embeddings. Our experiments confirmed the utility of this extension, demonstrating its ability to embed new data points while retaining the intrinsic supervised manifold structure. We quantitatively showed that RF-AE with RF-PHATE regularization outperforms baseline kernel extensions and other parametric embedding models in generating OOS embeddings that preserve feature importances relevant to the classification task while maintaining class separability. We further showed in Appendix~\ref{sec:ablation_supplemental} that RF-AE’s performance is robust to the geometric constraint $\lambda$ and the number of selected prototypes $N^*$. Visually, we observed that RF-AE regularized by RF-PHATE inherits the denoised local-to-global supervised structure of RF-PHATE while increasing resolution for improved within-class visualization. This results in a more effective tradeoff between preserving informative structure and achieving class separability than baseline RF-PHATE kernel extensions. Other methods either over-emphasize class separability or fail to incorporate sufficient supervision, leading to noisier visualizations. Finally, RF-AE’s ability to project unseen data without requiring labels makes it well suited for semi-supervised tasks on large-scale datasets.

Despite its effectiveness, RF-AE inherits scalability limitations due to the computational cost of computing RF-GAP proximities (Appendix~\ref{sec:ablation_supplemental}). Future work will explore adaptive prototype selection~\cite{Tan_Soloviev_Hooker_Wells_2020} and additional efficiency improvements to reduce the cost of RF-GAP calculations on large-scale datasets. Moreover, since the RF-GAP kernel is compatible with any kernel-based dimensionality reduction method, we plan to extend other methods within the RF-AE framework. From a broader perspective (Appendix~\ref{sec:impacts}), RF-AE will support decision-makers by validating expert or AI-driven predictions through structure- and label-informed 2D visualizations.



{
\small

\bibliographystyle{unsrt}         
\bibliography{references}          

\begin{thebibliography}{10}

\bibitem{vanDerMaaten2008tsne}
L.~van~der Maaten and G.~Hinton.
\newblock Visualizing data using {t-SNE}.
\newblock {\em J. Mach. Learn. Res.}, 9:2579--2605, 2008.

\bibitem{McInnes2018umap}
Leland McInnes, John Healy, Nathaniel Saul, and Lukas Großberger.
\newblock Umap: Uniform manifold approximation and projection.
\newblock {\em Journal of Open Source Software}, 3(29):861, 2018.

\bibitem{Moon2019phate}
Kevin~R. Moon, David van Dijk, Zheng Wang, Scott Gigante, Daniel~B. Burkhardt, William~S. Chen, Kristina Yim, Antonia van~den Elzen, Matthew~J. Hirn, Ronald~R. Coifman, Natalia~B. Ivanova, Guy Wolf, and Smita Krishnaswamy.
\newblock Visualizing structure and transitions in high-dimensional biological data.
\newblock {\em Nat. Biotechnol.}, 37(12):1482--1492, Dec 2019.

\bibitem{rhodes2024gaining}
J.~S. Rhodes, A.~Aumon, et~al.
\newblock Gaining biological insights through supervised data visualization.
\newblock {\em bioRxiv}, 2024.

\bibitem{rhodes2023geometry}
Jake~S. Rhodes, Adele Cutler, and Kevin~R. Moon.
\newblock Geometry- and accuracy-preserving random forest proximities.
\newblock {\em IEEE Transactions on Pattern Analysis and Machine Intelligence}, pages 1--13, 2023.

\bibitem{hajderanj2021impactsupman}
Laureta Hajderanj, Daqing Chen, and Isakh Weheliye.
\newblock The impact of supervised manifold learning on structure preserving and classification error: A theoretical study.
\newblock {\em IEEE Access}, 9:43909--43922, 2021.

\bibitem{bengio2003out}
Y.~Bengio, J.~Paiement, et~al.
\newblock Out-of-sample extensions for lle, isomap, mds, eigenmaps, and spectral clustering.
\newblock {\em NeurIPS}, 16, 2003.

\bibitem{roweis2000lle}
Sam~T. Roweis and Lawrence~K. Saul.
\newblock Nonlinear dimensionality reduction by locally linear embedding.
\newblock {\em Science}, 290(5500):2323--2326, 2000.

\bibitem{coifman2006geometric}
R.~R. Coifman and S.~Lafon.
\newblock Geometric harmonics: A novel tool for multiscale out-of-sample extension of empirical functions.
\newblock {\em Appl. Comput. Harmon. Anal.}, 21(1):31--52, 2006.

\bibitem{Rudi2015less}
Alessandro Rudi, Raffaello Camoriano, and Lorenzo Rosasco.
\newblock Less is more: Nyström computational regularization.
\newblock In {\em Advances in Neural Information Processing Systems}, volume~28. Curran Associates, Inc., 2015.

\bibitem{MendozaQuispe2016extreme}
Arturo Mendoza~Quispe, Caroline Petitjean, and Laurent Heutte.
\newblock Extreme learning machine for out-of-sample extension in laplacian eigenmaps.
\newblock {\em Pattern Recognition Letters}, 74:68–73, April 2016.

\bibitem{duque2020grae}
Andr'es~F. Duque, Sacha Morin, Guy Wolf, and Kevin~R. Moon.
\newblock Extendable and invertible manifold learning with geometry regularized autoencoders.
\newblock {\em 2020 IEEE International Conference on Big Data (Big Data)}, pages 5027--5036, 2020.

\bibitem{duque2022geometry}
A.F. Duque, S.~Morin, G.~Wolf, and K.R. Moon.
\newblock Geometry regularized autoencoders.
\newblock {\em IEEE Transactions on Pattern Analysis and Machine Intelligence}, 45(6):7381--7394, 2022.

\bibitem{maaten2009parametric_tsne}
Laurens van~der Maaten.
\newblock Learning a parametric embedding by preserving local structure.
\newblock In David van Dyk and Max Welling, editors, {\em Proceedings of the Twelfth International Conference on Artificial Intelligence and Statistics}, volume~5 of {\em Proceedings of Machine Learning Research}, pages 384--391, Hilton Clearwater Beach Resort, Clearwater Beach, Florida USA, 16--18 Apr 2009. PMLR.

\bibitem{sainburg2021parametric_umap}
Tim Sainburg, Leland McInnes, and Timothy~Q. Gentner.
\newblock Parametric umap embeddings for representation and semisupervised learning.
\newblock {\em Neural Computation}, 33(11):2881--2907, 10 2021.

\bibitem{zhang2016understanding}
Chiyuan Zhang, Samy Bengio, Moritz Hardt, Benjamin Recht, and Oriol Vinyals.
\newblock Understanding deep learning requires rethinking generalization.
\newblock {\em CoRR}, abs/1611.03530, 2016.

\bibitem{arpit2017closer}
Devansh Arpit, Stanis{\l}aw Jastrzebski, Nicolas Ballas, David Krueger, Emmanuel Bengio, Maxinder~S Kanwal, Tegan Maharaj, Asja Fischer, Aaron Courville, Yoshua Bengio, et~al.
\newblock A closer look at memorization in deep networks.
\newblock In {\em International conference on machine learning}, pages 233--242. PMLR, 2017.

\bibitem{bourlard1989early}
N.~Morgan and H.~Bourlard.
\newblock Generalization and parameter estimation in feedforward nets: Some experiments.
\newblock In D.~Touretzky, editor, {\em Advances in Neural Information Processing Systems}, volume~2. Morgan-Kaufmann, 1989.

\bibitem{Wager2013dropout}
Stefan Wager, Sida Wang, and Percy~S Liang.
\newblock Dropout training as adaptive regularization.
\newblock In C.~J. Burges, L.~Bottou, M.~Welling, Z.~Ghahramani, and K.~Q. Weinberger, editors, {\em Advances in Neural Information Processing Systems}, volume~26. Curran Associates, Inc., 2013.

\bibitem{Bengio2006pretraining}
Yoshua Bengio, Pascal Lamblin, Dan Popovici, and Hugo Larochelle.
\newblock Greedy layer-wise training of deep networks.
\newblock In B.~Schölkopf, J.~Platt, and T.~Hoffman, editors, {\em Advances in Neural Information Processing Systems}, volume~19. MIT Press, 2006.

\bibitem{Caruana1997multitask}
Rich Caruana.
\newblock Multitask learning.
\newblock {\em Machine Learning}, 28(1):41–75, July 1997.

\bibitem{baxter1995learning}
Jonathan Baxter.
\newblock Learning internal representations.
\newblock In {\em Proceedings of the Eighth Annual Conference on Computational Learning Theory}, COLT '95, page 311–320, New York, NY, USA, 1995. Association for Computing Machinery.

\bibitem{baxter2000inductive_bias}
Jonathan Baxter.
\newblock A model of inductive bias learning.
\newblock {\em J. Artif. Int. Res.}, 12(1):149–198, March 2000.

\bibitem{Maurer2006multitask}
Andreas Maurer.
\newblock Bounds for linear multi-task learning.
\newblock {\em Journal of Machine Learning Research}, 7(5):117–139, 2006.

\bibitem{pontil2013multitask}
Massimiliano Pontil and Andreas Maurer.
\newblock Excess risk bounds for multitask learning with trace norm regularization.
\newblock In Shai Shalev-Shwartz and Ingo Steinwart, editors, {\em Proceedings of the 26th Annual Conference on Learning Theory}, volume~30 of {\em Proceedings of Machine Learning Research}, pages 55--76, Princeton, NJ, USA, 12--14 Jun 2013. PMLR.

\bibitem{Le2018supervised_autoencoders}
Lei Le, Andrew Patterson, and Martha White.
\newblock Supervised autoencoders: Improving generalization performance with unsupervised regularizers.
\newblock In S.~Bengio, H.~Wallach, H.~Larochelle, K.~Grauman, N.~Cesa-Bianchi, and R.~Garnett, editors, {\em Advances in Neural Information Processing Systems}, volume~31. Curran Associates, Inc., 2018.

\bibitem{theis2017lossy}
L.~Theis, W.~Shi, et~al.
\newblock Lossy image compression with compressive autoencoders.
\newblock In {\em ICLR}, 2022.

\bibitem{Wang2016sdne}
Daixin Wang, Peng Cui, and Wenwu Zhu.
\newblock Structural deep network embedding.
\newblock In {\em Proceedings of the 22nd ACM SIGKDD International Conference on Knowledge Discovery and Data Mining}, KDD ’16, page 1225–1234, New York, NY, USA, August 2016. Association for Computing Machinery.

\bibitem{belkin2001laplacian}
M.~Belkin and P.~Niyogi.
\newblock Laplacian eigenmaps and spectral techniques for embedding and clustering.
\newblock In {\em Proceedings of the 14th International Conference on Neural Information Processing Systems: Natural and Synthetic}, NIPS'01, page 585–591, Cambridge, MA, USA, 2001. MIT Press.

\bibitem{wang2020lgae}
Rui Li, Xiaodan Wang, Jie Lai, Yafei Song, and Lei Lei.
\newblock Discriminative auto-encoder with local and global graph embedding.
\newblock {\em IEEE Access}, 8:28614–28623, 2020.

\bibitem{graving2020vae-sne}
Jacob~M. Graving and Iain~D. Couzin.
\newblock Vae-sne: a deep generative model for simultaneous dimensionality reduction and clustering.
\newblock {\em bioRxiv}, 2020.

\bibitem{vincent2008extracting}
Pascal Vincent, Hugo Larochelle, Yoshua Bengio, and Pierre-Antoine Manzagol.
\newblock Extracting and composing robust features with denoising autoencoders.
\newblock In {\em Proceedings of the 25th international conference on Machine learning}, pages 1096--1103, 2008.

\bibitem{Wang2014gae}
Wei Wang, Yan Huang, Yizhou Wang, and Liang Wang.
\newblock Generalized autoencoder: A neural network framework for dimensionality reduction.
\newblock page 490–497, 2014.

\bibitem{Ghosh2022centroid}
Tomojit Ghosh and Michael Kirby.
\newblock Supervised dimensionality reduction and visualization using centroid-encoder.
\newblock {\em Journal of Machine Learning Research}, 23(20):1–34, 2022.

\bibitem{espadoto2021ssnp}
Mateus Espadoto, {Nina S.T.} Hirata, and {Alexandru C.} Telea.
\newblock Self-supervised dimensionality reduction with neural networks and pseudo-labeling.
\newblock In Christophe Hurter, Helen Purchase, Jose Braz, and Kadi Bouatouch, editors, {\em IVAPP}, VISIGRAPP 2021 - Proceedings of the 16th International Joint Conference on Computer Vision, Imaging and Computer Graphics Theory and Applications, pages 27--37. SciTePress, 2021.

\bibitem{LEE2021nrae}
Yonghyeon LEE, Hyeokjun Kwon, and Frank Park.
\newblock Neighborhood reconstructing autoencoders.
\newblock In {\em Advances in Neural Information Processing Systems}, volume~34, page 536–546. Curran Associates, Inc., 2021.

\bibitem{Nazari2023geometric}
Philipp Nazari, Sebastian Damrich, and Fred~A. Hamprecht.
\newblock Geometric autoencoders -- what you see is what you decode.
\newblock (arXiv:2306.17638), June 2023.
\newblock arXiv:2306.17638 [cs].

\bibitem{Gisbrecht2012oos}
Andrej Gisbrecht, Wouter Lueks, Bassam Mokbel, Barbara Hammer, et~al.
\newblock Out-of-sample kernel extensions for nonparametric dimensionality reduction.
\newblock In {\em ESANN}, volume 2012, pages 531--536, 2012.

\bibitem{Gisbrecht2015kernel-tsne}
Andrej Gisbrecht, Alexander Schulz, and Barbara Hammer.
\newblock Parametric nonlinear dimensionality reduction using kernel t-sne.
\newblock {\em Neurocomputing}, 147:71–82, January 2015.

\bibitem{Arias2007connecting}
Pablo Arias, Gregory Randall, and Guillermo Sapiro.
\newblock Connecting the out-of-sample and pre-image problems in kernel methods.
\newblock In {\em 2007 IEEE Conference on Computer Vision and Pattern Recognition}, page 1–8, June 2007.

\bibitem{Chen2013sparse}
George~H. Chen, Christian Wachinger, and Polina Golland.
\newblock Sparse projections of medical images onto manifolds.
\newblock {\em Information Processing in Medical Imaging: Proceedings of the ... Conference}, 23:292–303, 2013.

\bibitem{saul2003think}
Lawrence~K. Saul and Sam~T. Roweis.
\newblock Think globally, fit locally: Unsupervised learning of low dimensional manifolds.
\newblock {\em Journal of Machine Learning Research}, 4(Jun):119–155, 2003.

\bibitem{gigante2019compressed}
Scott Gigante, Jay~S. Stanley, Ngan Vu, David~van Dijk, Kevin~R. Moon, Guy Wolf, and Smita Krishnaswamy.
\newblock Compressed diffusion.
\newblock In {\em 2019 13th International conference on Sampling Theory and Applications (SampTA)}, pages 1--4, 2019.

\bibitem{Ran2024kumap}
Ruisheng Ran, Benchao Li, and Yun Zou.
\newblock {\em Kumap: Kernel Uniform Manifold Approximation and Projection for Out-of-sample Extensions Problem}.
\newblock January 2024.

\bibitem{lin1991divergence}
Jorma Lin.
\newblock Divergence measures based on the shannon entropy.
\newblock {\em IEEE Transactions on Information Theory}, 37(1):145--151, 1991.

\bibitem{Im_Verma_Branson_2018}
Daniel~Jiwoong Im, Nakul Verma, and Kristin Branson.
\newblock Stochastic neighbor embedding under f-divergences.
\newblock (arXiv:1811.01247), November 2018.
\newblock arXiv:1811.01247 [cs].

\bibitem{rhodes2021rfphate}
Jake~S. Rhodes, Adele Cutler, Guy Wolf, and Kevin~R. Moon.
\newblock Random forest-based diffusion information geometry for supervised visualization and data exploration.
\newblock {\em 2021 IEEE Statistical Signal Processing Workshop (SSP)}, pages 331--335, 2021.

\bibitem{gomes2010budgeted}
Ryan Gomes and Andreas Krause.
\newblock Budgeted nonparametric learning from data streams.
\newblock In {\em ICML}, volume~1, page~3. Citeseer, 2010.

\bibitem{Tan_Soloviev_Hooker_Wells_2020}
Sarah Tan, Matvey Soloviev, Giles Hooker, and Martin~T. Wells.
\newblock Tree space prototypes: Another look at making tree ensembles interpretable.
\newblock In {\em Proceedings of the 2020 ACM-IMS on Foundations of Data Science Conference}, page 23–34, Virtual Event USA, October 2020. ACM.

\bibitem{schubert2019faster}
Erich Schubert and Peter~J Rousseeuw.
\newblock Faster k-medoids clustering: improving the pam, clara, and clarans algorithms.
\newblock In {\em Similarity Search and Applications: 12th International Conference, SISAP 2019, Newark, NJ, USA, October 2--4, 2019, Proceedings 12}, pages 171--187. Springer, 2019.

\bibitem{schubert2021fast}
Erich Schubert and Peter~J Rousseeuw.
\newblock Fast and eager k-medoids clustering: O (k) runtime improvement of the pam, clara, and clarans algorithms.
\newblock {\em Information Systems}, 101:101804, 2021.

\bibitem{goldberber2004nca}
J.~Goldberger, S.~Roweis, G.~Hinton, and R.~Salakhutdinov.
\newblock Neighbourhood components analysis.
\newblock In {\em Adv. Neural. Inf. Process. Systs.}, NIPS'04, page 513–520, Cambridge, MA, USA, 2004. MIT Press.

\bibitem{wang2021understanding}
Yingfan Wang, Haiyang Huang, Cynthia Rudin, and Yaron Shaposhnik.
\newblock Understanding how dimension reduction tools work: an empirical approach to deciphering t-sne, umap, trimap, and pacmap for data visualization.
\newblock {\em Journal of Machine Learning Research}, 22(201):1--73, 2021.

\bibitem{Kobak2019tsne}
Dmitry Kobak and Philipp Berens.
\newblock The art of using t-sne for single-cell transcriptomics.
\newblock {\em Nature Communications}, 10(1):5416, November 2019.

\bibitem{kobak2019heavy}
Dmitry Kobak, George Linderman, Stefan Steinerberger, Yuval Kluger, and Philipp Berens.
\newblock Heavy-tailed kernels reveal a finer cluster structure in t-sne visualisations.
\newblock In {\em Joint European Conference on Machine Learning and Knowledge Discovery in Databases}, pages 124--139. Springer, 2019.

\bibitem{venna2006local}
Jarkko Venna and Samuel Kaski.
\newblock Local multidimensional scaling.
\newblock {\em Neural Networks}, 19(6-7):889--899, 2006.

\bibitem{Gildenblat_Pahnke_2025}
Jacob Gildenblat and Jens Pahnke.
\newblock Preserving clusters and correlations: a dimensionality reduction method for exceptionally high global structure preservation.
\newblock (arXiv:2503.07609), March 2025.
\newblock arXiv:2503.07609 [cs].

\bibitem{Kendall1938}
M.~G. Kendall.
\newblock A new measure of rank correlation.
\newblock {\em Biometrika}, 30(1/2):81–93, 1938.

\bibitem{damrich2023from}
Sebastian Damrich, Jan~Niklas B{\"o}hm, Fred~A Hamprecht, and Dmitry Kobak.
\newblock From $t$-{SNE} to {UMAP} with contrastive learning.
\newblock In {\em International Conference on Learning Representations}, 2023.

\bibitem{Schneider2023cebra}
Steffen Schneider, Jin~Hwa Lee, and Mackenzie~Weygandt Mathis.
\newblock Learnable latent embeddings for joint behavioural and neural analysis.
\newblock {\em Nature}, 617(7960):360--368, May 2023.

\bibitem{gottfries1995diagnosis}
Johan Gottfries, Kaj Blennow, Anders Wallin, and CG~Gottfries.
\newblock Diagnosis of dementias using partial least squares discriminant analysis.
\newblock {\em Dementia and Geriatric Cognitive Disorders}, 6(2):83--88, 1995.

\bibitem{barker2003partial}
Matthew Barker and William Rayens.
\newblock Partial least squares for discrimination.
\newblock {\em Journal of Chemometrics: A Journal of the Chemometrics Society}, 17(3):166--173, 2003.

\bibitem{Hoeffding_1963}
Wassily Hoeffding.
\newblock Probability inequalities for sums of bounded random variables.
\newblock {\em Journal of the American Statistical Association}, 58(301):13–30, 1963.

\bibitem{kaneko2022cvpfi}
Hiromasa Kaneko.
\newblock Cross-validated permutation feature importance considering correlation between features.
\newblock {\em Analytical Science Advances}, 3(9-10):278--287, 2022.

\bibitem{spearman1904general}
Charles Spearman.
\newblock ``general intelligence,'' objectively determined and measured.
\newblock {\em The American Journal of Psychology}, 15(2):201--293, 1904.

\bibitem{pearson1895vii}
Karl Pearson.
\newblock Vii. note on regression and inheritance in the case of two parents.
\newblock {\em proceedings of the royal society of London}, 58(347-352):240--242, 1895.

\bibitem{scikit-learn}
F.~Pedregosa, G.~Varoquaux, A.~Gramfort, V.~Michel, B.~Thirion, O.~Grisel, M.~Blondel, P.~Prettenhofer, R.~Weiss, V.~Dubourg, J.~Vanderplas, A.~Passos, D.~Cournapeau, M.~Brucher, M.~Perrot, and E.~Duchesnay.
\newblock Scikit-learn: Machine learning in {P}ython.
\newblock {\em Journal of Machine Learning Research}, 12:2825--2830, 2011.

\bibitem{sign_language_mnist}
Daniel Massey.
\newblock Sign language mnist.
\newblock \url{https://www.kaggle.com/datasets/datamunge/sign-language-mnist}, 2017.
\newblock Accessed: 2025-05-14.

\bibitem{lecun2010mnist}
Yann LeCun, Corinna Cortes, and C.~J.~C. Burges.
\newblock Mnist handwritten digit database.
\newblock {\em ATT Labs [Online]. Available: http://yann.lecun.com/exdb/mnist}, 2, 2010.

\bibitem{xiao2017fashionmnist}
Han Xiao, Kashif Rasul, and Roland Vollgraf.
\newblock Fashion-mnist: a novel image dataset for benchmarking machine learning algorithms.
\newblock {\em arXiv preprint arXiv:1708.07747}, 2017.

\bibitem{sturm2013gtzan}
Bob~L. Sturm.
\newblock The gtzan dataset: Its contents, its faults, their effects on evaluation, and its future use.
\newblock {\em arXiv preprint arXiv:1306.1461}, 2013.

\bibitem{USPS}
J.J. Hull.
\newblock A database for handwritten text recognition research.
\newblock {\em IEEE Transactions on Pattern Analysis and Machine Intelligence}, 16(5):550--554, 1994.

\bibitem{medmnistv1}
Jiancheng Yang, Rui Shi, and Bingbing Ni.
\newblock Medmnist classification decathlon: A lightweight automl benchmark for medical image analysis.
\newblock In {\em IEEE 18th International Symposium on Biomedical Imaging (ISBI)}, pages 191--195, 2021.

\bibitem{medmnistv2}
Jiancheng Yang, Rui Shi, Donglai Wei, Zequan Liu, Lin Zhao, Bilian Ke, Hanspeter Pfister, and Bingbing Ni.
\newblock Medmnist v2-a large-scale lightweight benchmark for 2d and 3d biomedical image classification.
\newblock {\em Scientific Data}, 10(1):41, 2023.

\bibitem{loshchilov2018decoupled}
Ilya Loshchilov and Frank Hutter.
\newblock Decoupled weight decay regularization.
\newblock In {\em International Conference on Learning Representations}, 2019.

\bibitem{oord2018representation}
Aaron van~den Oord, Yazhe Li, and Oriol Vinyals.
\newblock Representation learning with contrastive predictive coding.
\newblock {\em arXiv preprint arXiv:1807.03748}, 2018.

\bibitem{Yadan2019Hydra}
Omry Yadan.
\newblock Hydra - a framework for elegantly configuring complex applications.
\newblock Github, 2019.

\end{thebibliography}

}

\newpage
\appendix

\renewcommand{\thefigure}{S\arabic{figure}}
\renewcommand{\thetable}{S\arabic{table}}
\renewcommand{\thealgocf}{S\arabic{algocf}}
\setcounter{figure}{0}
\setcounter{table}{0}
\setcounter{algocf}{0}

\section{Maximality of the RF-GAP self-similarity}\label{sec:proof_maximality}

Recall the RF-GAP proximity (Section~\ref{subsec:oosRFGAP}) between observations $\mathbf{x}_i$ and $\mathbf{x}_j$
\[
p(\mathbf{x}_i,\mathbf{x}_j)
=
\begin{cases}
\displaystyle
\frac{1}{\lvert \bar S_i\rvert}
\sum_{t\in \bar S_i}
\frac{c_i(t)}{\lvert M_i(t)\rvert},
& j = i,\\[1.5em]
\displaystyle
\frac{1}{\lvert S_i\rvert}
\sum_{t\in S_i}
\frac{c_j(t)\,I\left(j\in J_i(t)\right)}{\lvert M_i(t)\rvert},
& j \neq i,
\end{cases}
\]
where:
\begin{itemize}
  \item $S_i$ is the set of trees in which $\mathbf{x}_i$ is \emph{out‐of‐bag} (OOB).
  \item $\bar S_i$ is the set of trees in which $\mathbf{x}_i$ is \emph{in‐bag}.
  \item $c_i(t)$ is the bootstrap multiplicity of $\mathbf{x}_i$ in tree $t$.
  \item $J_i(t)$ is the set of in‐bag points that share the terminal node with $\mathbf{x}_i$ in tree $t$.
  \item $M_i(t)$ is the multiset of in‐bag sample indices in that terminal node (counting multiplicities).
\end{itemize}

\medskip

For any fixed tree $t$, we define the random quantities
\begin{itemize}
  \item $B_i(t) = I(\mathbf{x}_i\text{ is in‐bag in tree }t),$
  
  \item $D_{ij}(t) = I(B_j(t)=1\text{ and }\mathbf{x}_j\text{ shares }\mathbf{x}_i\text{'s leaf in tree }t),$
  
  \item $c_i(t) \sim \mathrm{Binomial}\bigl(N,\tfrac1N\bigr),\quad i=1,\dots,N,$
  

\end{itemize}

Thus, considering all trees in the forest,

\begin{itemize}
      \item $T_i^{\mathrm{IB}} = \sum_{t=1}^{|T|} B_i(t),$
  
      \item $T_i^{\mathrm{OOB}} = \sum_{t=1}^{|T|} 1-B_i(t),$
\end{itemize}

and per‐tree contributions to self‐ and cross‐similarity are re-written as

\[
\alpha_{ii}(t)
:= B_i(t)\,\frac{c_i(t)}{\lvert M_i(t)\rvert},
\qquad
\alpha_{ij}(t)
:= D_{ij}(t)\,\frac{c_j(t)}{\lvert M_i(t)\rvert}
\quad(i\neq j).
\]

Under the standard Random Forests assumptions, the following holds:

\begin{itemize}
\item \emph{Tree independence.}
          Each tree is grown from an independent bootstrap sample and an
          independent sequence of feature splits, ensuring i.i.d.\ per‑tree
          contributions $\alpha_{ii}(t)$ and $\alpha_{ij}(t)$.
  \item \emph{Bootstrap inclusion probability.}  An observation is in‐bag in tree $t$ with probability
\begin{align*}
    p := \mathbb{P}[B_i(t)=1] = \mathbb{P}[c_i(t)\geq 1]=1 - \mathbb{P}[c_i(t)= 0]
    &=1 - \Bigl(1-\tfrac1N\Bigr)^{N}\\
    &\longrightarrow\;1 - e^{-1}\approx0.632.
\end{align*}

Hence,
\[
\begin{array}{rcl}
B_i(t) &\sim& \mathrm{Bernoulli}(p) \\[1ex]
T_i^{\mathrm{IB}} &\sim& \mathrm{Binomial}(|T|,\,p) \\[1ex]
T_i^{\mathrm{OOB}} &\sim& \mathrm{Binomial}(|T|,\,1-p)
\end{array}
\]


  \item \emph{Co‐occurrence probability.}  Even if $\mathbf{x}_j$ is very similar to $\mathbf{x}_i$, the probability that they end up together in the same leaf \emph{and} $\mathbf{x}_i$ was OOB is strictly less than 1:
  \[
    q_{ij}
    :=\mathbb{P}\left[D_{ij}(t)=1\mid B_i(t)=0\right]
    \;<\;1\qquad (i\neq j).
  \]
\end{itemize}

\begin{proposition}\label{thm:maximality}
For every fixed $i$ and any $j\neq i$, in the limit as the number of trees $|T|\to\infty$,
\[
p(\mathbf{x}_i,\mathbf{x}_i)
\;>\;
p(\mathbf{x}_i,\mathbf{x}_j)
\]
\end{proposition}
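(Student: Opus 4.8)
The plan is to reduce the almost-sure comparison of the two proximities to a comparison of deterministic limiting expectations via the strong law of large numbers, and then to isolate the co-occurrence probability $q_{ij}$ as the sole source of the strict inequality.

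First I would rewrite both proximities as ratios of empirical tree-averages over all $|T|$ trees, namely $p(\mathbf{x}_i,\mathbf{x}_i) = \big(\sum_{t}\alpha_{ii}(t)\big)\big/\big(\sum_t B_i(t)\big)$ and $p(\mathbf{x}_i,\mathbf{x}_j) = \big(\sum_t (1-B_i(t))\,\alpha_{ij}(t)\big)\big/\big(\sum_t (1-B_i(t))\big)$. Because each tree is grown from an independent bootstrap sample and an independent sequence of splits, the per-tree contributions are i.i.d.\ and bounded in $[0,1]$, so the SLLN applies to numerators and denominators separately. Since $\mathbb{E}[B_i(t)] = p \in (0,1)$, the denominators stay bounded away from zero, and each ratio converges almost surely to the ratio of the limiting means, which are deterministic.

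Next I would identify the two limits as conditional expectations of the same per-tree fraction. This yields $p(\mathbf{x}_i,\mathbf{x}_i) \to A := \mathbb{E}\big[\tfrac{c_i(t)}{|M_i(t)|}\,\big|\,B_i(t)=1\big]$, and, after conditioning the cross term on the event $D_{ij}(t)=1$, $p(\mathbf{x}_i,\mathbf{x}_j) \to B := q_{ij}\cdot \mathbb{E}\big[\tfrac{c_j(t)}{|M_i(t)|}\,\big|\,B_i(t)=0,\,D_{ij}(t)=1\big]$. Both expectations have the same shape: the expected fraction of a terminal node's in-bag mass occupied by a designated in-bag point whose multiplicity is $\mathrm{Binomial}(N,\tfrac1N)$. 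The whole statement therefore reduces to proving the deterministic inequality $A > B$, from which the eventual a.s.\ comparison of the proximities follows.

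The decisive step is the strict factor $q_{ij}<1$: it suffices to show $\mathbb{E}\big[\tfrac{c_j}{|M_i|}\mid B_i=0,\,D_{ij}=1\big] \le A$, since then $B \le q_{ij}A < A$. I would establish this bound by exploiting the exchangeability of $c_i(t)$ and $c_j(t)$ (identically distributed and independent across trees) together with a coupling of the leaf composition across the two conditioning regimes. The hard part is exactly this last inequality, because the conditioning events shape the occupancy $|M_i(t)|$ differently—in the self case $\mathbf{x}_i$ is itself an in-bag occupant that participates in the splits, whereas in the cross case $\mathbf{x}_i$ is OOB and $\mathbf{x}_j$ is forced into its leaf. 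The plan is to argue that, conditioned on a fixed leaf and its in-bag contents, the designated-point fraction has the same law in both regimes by symmetry of the bootstrap multiplicities, and that the residual difference in the distribution of leaf sizes cannot reverse the inequality; the strict separation then comes entirely from $q_{ij}<1$.
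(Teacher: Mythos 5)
Your proposal follows essentially the same route as the paper's proof: the same rewriting of both proximities as ratios of i.i.d.\ per-tree sums, the same application of the strong law of large numbers to numerators and denominators separately, the same identification of the limits as $\mu$ and $q_{ij}$ times a conditional expectation, and the same key bound that the cross-case conditional expectation of $c_j(t)/|M_i(t)|$ is at most $\mu$ (which the paper likewise justifies by equality of the marginal laws of $c_i$ and $c_j$ together with a stochastic-dominance claim on the leaf size under the co-occurrence conditioning), with the strict inequality coming entirely from $q_{ij}<1$. The argument is correct at the same level of rigor as the paper's, so no further comparison is needed.
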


\begin{proof}
 
RF-GAP similarities are re-written as random variables:
\[
p(\mathbf{x}_i, \mathbf{x}_j) =
\begin{cases}
  \displaystyle \frac{1}{T_i^{\mathrm{IB}}} \sum_{t=1}^T \alpha_{ii}(t), & \text{if } i = j, \\[10pt]
  \displaystyle \frac{1}{T_i^{\mathrm{OOB}}} \sum_{t=1}^T \alpha_{ij}(t), & \text{otherwise}.
\end{cases}
\]

By the Strong Law of Large Numbers and tree‐independence, as \(|T|\to\infty\) we have almost surely
\[
\frac{1}{|T|}\sum_{t=1}^{|T|} \alpha_{ii}(t)
\;\longrightarrow\;
\mathbb{E}[\alpha_{ii}(t)]
\;=\;
\mathbb{E}\left[B_i(t)\,\tfrac{c_i(t)}{|M_i(t)|}\right]
\;=\;p\;\underbrace{\mathbb{E}\left[\tfrac{c_i(t)}{|M_i(t)|}\mid B_i(t)=1\right]}_{=\mu}
\;=\;p\,\mu,
\]

\begin{align*}
\frac{1}{|T|}\sum_{t=1}^{|T|} \alpha_{ij}(t)
\;\longrightarrow\;
\mathbb{E}[\alpha_{ij}(t)]
\;&=\;
\mathbb{E}\left[D_{ij}(t)\,\tfrac{c_j(t)}{|M_i(t)|}\right]\\
\;&=\;
(1-p)\,q_{ij}\,\underbrace{\mathbb{E}\left[\tfrac{c_j(t)}{|M_i(t)|}\bigm|D_{ij}(t)=1,B_i(t)=0\right]}_{\le\mu}\\
\;&\le\;(1-p)\,q_{ij}\,\mu.
\end{align*}

The inequality $\mathbb{E}\left[\left.\frac{c_j(t)}{|M_i(t)|}\right| D_{ij}(t)=1, B_i(t)=0 \right] \le \mu := \mathbb{E}\left[\left.\frac{c_i(t)}{|M_i(t)|}\right| B_i(t)=1 \right]$ follows because the marginal distributions of $c_j(t)$ and $c_i(t)$ are identical, but the conditional distribution of $|M_i(t)|$ is stochastically larger under $D_{ij}(t)=1, B_i(t)=0$ than under $B_i(t)=1$. Indeed, conditioning on $D_{ij}(t)=1$ requires that $\mathbf{x}_j$ shares a leaf with $\mathbf{x}_i$, even though $\mathbf{x}_i$ is OOB, which biases the leaf size upward — thereby lowering the expected normalized count $c_j(t)/|M_i(t)|$.
Moreover, almost surely,
\[
\frac{T_i^{\mathrm{IB}}}{|T|}\;\longrightarrow\;\mathbb{E}\left[B_i(t)\right]=p,
\qquad
\frac{T_i^{\mathrm{OOB}}}{|T|}\;\longrightarrow\;\mathbb{E}\left[1-B_i(t)\right]=1-p
\]
Thus,
\begin{align*}
p(\mathbf{x}_i,\mathbf{x}_i)
&=
\frac{\frac{1}{|T|}\sum_{t=1}^{|T|}\alpha_{ii}(t)}{\tfrac{T_i^{\mathrm{IB}}}{|T|}}
\;\longrightarrow\;
\frac{p\,\mu}{p}
= \mu, \\[1em]
p(\mathbf{x}_i,\mathbf{x}_j)
&=
\frac{\frac{1}{|T|}\sum_{t=1}^{|T|}\alpha_{ij}(t)}{\tfrac{T_i^{\mathrm{OOB}}}{|T|}}
\;\longrightarrow\;
\frac{\mathbb{E}[\alpha_{ij}(t)]}{1-p}
\;\leq\;
\frac{(1-p)\,q_{ij}\,\mu}{1-p}
= q_{ij}\,\mu.
\end{align*}
Since we assumed \(q_{ij}<1\), it follows that
\[
\mu
\;>\;
q_{ij}\,\mu
\quad\Longrightarrow\quad
\lim_{|T|\to\infty}p(\mathbf{x}_i,\mathbf{x}_i)
\;>\;
\lim_{|T|\to\infty}p(\mathbf{x}_i,\mathbf{x}_j).
\]
\end{proof}

\begin{remark}
    Finite-\(|T|\) concentration bounds (e.g.\ Hoeffding’s inequality~\cite{Hoeffding_1963}) imply the same inequality holds with overwhelming probability.
\end{remark}
\begin{remark}
    The assumption $q_{ij}<1$ is not necessary for non-strict inequality.
\end{remark}

\section{RF-GAP representations stabilize supervised manifold learning}\label{sec:kernel_vs_feature}

We designed our RF-AE framework under the premise that encoders operating on (supervised) kernel representations are better suited for supervised settings than those using raw input features. This assumption stems from the ability of well-chosen kernel functions to effectively filter out irrelevant features, thereby enhancing the encoder's robustness to highly noisy datasets. To empirically validate this, we conducted a toy experiment using the artificial tree dataset described in Appendix~\ref{sec:artificial_tree}. To simulate a noisy input space, we progressively augmented the dataset with additional features sampled from a uniform distribution $U(0,1)$, corresponding to various signal-to-noise ratios (SNR) $\in \{\infty, 1, 0.1, 0.01, 0.001\}$. We then randomly selected 80\% of each augmented dataset to train both models to regress onto the precomputed training RF-PHATE embeddings. The two MLP regressors shared the exact same architecture and hyperparameters (Appendix~\ref{sec:rfae_exp_setting}), differing only in their input representations. We evaluated the trained models on the remaining 20\% test split and visualized their two-dimensional embeddings under different SNR conditions, along with the ground-truth tree structure and median learning curves over 50 epochs across 10 repetitions, as shown in Fig.~\ref{fig:kernel_vs_feature}.

The training RF-PHATE embeddings accurately capture the underlying ground-truth structure, making them a strong supervisory signal for manifold learning. Our RF-GAP-based encoder proves highly robust to irrelevant features, producing well-structured embeddings even under severe noise conditions (e.g., SNR = 0.001). It consistently converges faster and reaches a better local minimum without overfitting, as evidenced by its test embeddings (middle row), which closely mirror the ground-truth structure. In contrast, the feature-based MLP is much more sensitive to noise, with increasing training loss and disordered embeddings in both training and test sets. Even under low-noise settings (SNR = $\infty$ or 1), it fails to achieve comparable performance, highlighting the superior robustness and generalization ability of our kernel-based encoder.

\begin{figure*}[ht]
    \centering
    \includegraphics[width = 0.9\textwidth]{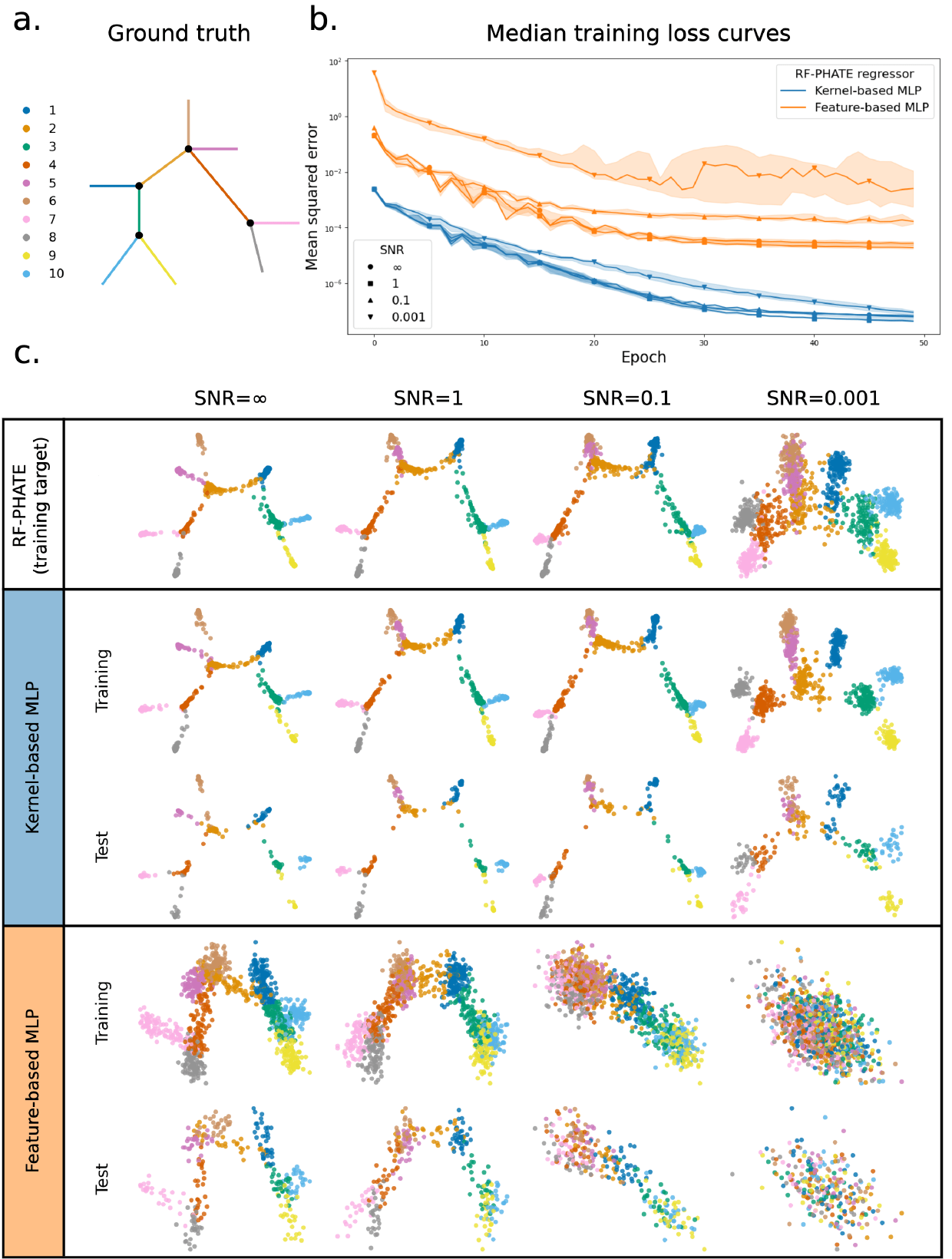}
    \caption{Comparison between the standard feature-based MLP encoder and our proposed RF-GAP kernel-based MLP encoder for regressing onto precomputed RF-PHATE embeddings. \textbf{(a)} Ground-truth tree structure with branch labels (see Appendix~\ref{sec:artificial_tree}). \textbf{(b)} Log-scaled median training MSE with $25^{\text{th}}$ and $75^{\text{th}}$ enclosing percentiles over 50 epochs across 10 repetitions. \textbf{(c)} Training RF-PHATE embeddings (top row), followed by training and test embeddings produced by the RF-GAP-based encoder (middle row) and the feature-based encoder (bottom row) after 50 epochs from a single run. The RF-PHATE embeddings closely match the ground-truth structure and provide a strong target for supervised regression. Our kernel-based encoder remains effective even under high noise levels (e.g., SNR = 0.001), converging more quickly and producing well-structured embeddings with better generalization. In contrast, the feature-based MLP exhibits increasing training loss and disorganized embeddings as noise increases, and often fails to recover meaningful structure even in low-noise settings (SNR = $\infty$, 1), demonstrating the superior robustness of our kernel-based encoders.}
    \label{fig:kernel_vs_feature}
\end{figure*}

\section{Artificial tree construction}\label{sec:artificial_tree}

We constructed the artificial tree data used in Section~\ref{subsec:quant_comp} and Appendix~\ref{sec:kernel_vs_feature} following the method described in the original PHATE paper~\cite{Moon2019phate}. The first branch of the tree consists of 100 linearly spaced points spanning four dimensions, with all other dimensions set to zero. The second branch starts at the endpoint of the first branch, with its 100 points remaining constant in the first four dimensions while progressing linearly in the next four dimensions, leaving all others at zero. Similarly, the third branch progresses linearly in dimensions 9–12, with subsequent branches following the same pattern but differing in length, resulting in 40 dimensions. Each branch endpoint and branching point includes an additional 40 points, and zero-mean Gaussian noise (standard deviation 7) is added to simulate gene expression advancement along the branches. Before visualization, all features are normalized to the range [0, 1].

\section{Feature correlation-aware data perturbation}\label{sec:sampling}

In this section, we detail the procedure for generating perturbed datasets using a correlation-aware random sampling strategy~\cite{kaneko2022cvpfi}. Since ground truth feature importance are rarely available, this approach is employed to generate pseudo-ground truth feature importances as part of our evaluation scheme (Section~\ref{subsec:quantify_oos_embedding}). For each feature \(i\), instead of permuting feature \(i\)'s column values---as in the standard permutation approach---, we reassign them by randomly sampling values from the feature space. Additionally, all other feature column values are randomly replaced with a probability proportional to their absolute correlation with \(i\). In other words, column values for features highly correlated with \(i\) are also replaced by random sampling, while column values for features not correlated with \(i\) remain unchanged. This prevents the determination of fallacious feature importances where all correlated features are assigned zero importance. Refer to Algorithm ~\ref{alg:sampling} for a step-by-step description of this feature-wise data perturbation procedure.

\begin{algorithm}[!htb]
\caption{Feature-wise data perturbation with random sampling}
\label{alg:sampling}

\KwIn{Input data $X$, feature correlation matrix $\mathbf{C}$}
\KwOut{Perturbed datasets $\Tilde{\mathbf{X}}$ for each feature}

Initialize list $\Tilde{\mathbf{X}}$ to store perturbed datasets\;

Generate $\Tilde{X}$ from $X$ by randomly sampling column values without replacement\;

\ForEach{feature $i$}{
    Generate mask matrix $\mathbf{M}$ with elements $\mathbf{M}[i,j] \in \{0,1\}$ sampled from $\text{Bernoulli}(|\mathbf{C}[i,j]|)$\;
    
    Build perturbed dataset: $\Tilde{X}^i = \mathbf{M} \odot \Tilde{X} + (I - \mathbf{M}) \odot X$\;

    Store $\Tilde{\mathbf{X}}[i] = \Tilde{X}^i$\;
}

\Return{$\Tilde{\mathbf{X}}$}

\end{algorithm}

\section{Evaluation metric details for supervised OOS embedding}

\subsection{Structure preservation scores}\label{subsec:def_structure_preservation}

Let $\mathbf{D}_{\mathrm{test}}, \mathbf{D}_{\mathrm{test}}^{\mathrm{emb}} \in \mathbb{R}_{+}^{N_{\mathrm{test}} \times N_{\mathrm{train}}}$ denote the test–train distance matrices in the original and embedded spaces, respectively. We define four structure preservation metrics $s(\mathbf{D}_{\mathrm{test}}, \mathbf{D}_{\mathrm{test}}^{\mathrm{emb}})$, which are used to compute multi-view structural alignment scores introduced in Section~\ref{subsec:quantify_oos_embedding}. We categorize these metrics into local and global types and cite the reference works where they were previously used to assess the quality of embedding methods.

\paragraph{Local Structure Preservation Scores}

\begin{itemize}
  \item \textbf{QNX (Quality of Neighborhood eXtrapolation)}~\cite{sainburg2021parametric_umap, kobak2019heavy}:
  \[
  \textit{QNX}(\mathbf{D}_{\mathrm{test}}, \mathbf{D}_{\mathrm{test}}^{\mathrm{emb}}) := \frac{1}{N_{\mathrm{test}}} \sum_{i=1}^{N_{\mathrm{test}}} \frac{1}{K} \sum_{j \in \mathcal{N}_K^{\mathrm{emb}}(i)} I\left( j \in \mathcal{N}_K^{\mathrm{true}}(i) \right),
  \]
  where $\mathcal{N}_K^{\mathrm{true}}(i)$ are the indices of the $K$ smallest entries in row $\mathbf{D}_{\mathrm{test}}[i, :]$, and $\mathcal{N}_K^{\mathrm{emb}}(i)$ are those in $\mathbf{D}_{\mathrm{test}}^{\mathrm{emb}}[i, :]$.

  \item \textbf{Trustworthiness}~\cite{venna2006local}:
  \[
  \textit{Trust}(\mathbf{D}_{\mathrm{test}}, \mathbf{D}_{\mathrm{test}}^{\mathrm{emb}}) := 1 - \frac{2}{N_{\mathrm{test}} K (2N_{\mathrm{train}} - 3K - 1)} \sum_{i=1}^{N_{\mathrm{test}}} \sum_{j \in \mathcal{U}_i} \left( r_{ij}^{\mathrm{true}} - K \right),
  \]
  where $\mathcal{U}_i = \mathcal{N}_K^{\mathrm{emb}}(i) \setminus \mathcal{N}_K^{\mathrm{true}}(i)$, and $r_{ij}^{\mathrm{true}}$ is the rank of index $j$ in row $\mathbf{D}_{\mathrm{test}}[i, :]$.

\end{itemize}

\paragraph{Global Structure Preservation Scores}

\begin{itemize}
  \item \textbf{Spearman rank correlation}~\cite{Kobak2019tsne}:
  \[
  \textit{Spear}(\mathbf{D}_{\mathrm{test}}, \mathbf{D}_{\mathrm{test}}^{\mathrm{emb}}) := \mathrm{corr}_{\mathrm{rank}}\left( \mathrm{vec}(\mathbf{D}_{\mathrm{test}}),\ \mathrm{vec}(\mathbf{D}_{\mathrm{test}}^{\mathrm{emb}}) \right),
  \]
  where $\mathrm{vec}(\cdot)$ denotes vectorization and $\mathrm{corr}_{\mathrm{rank}}$ is the Spearman rank correlation~\cite{spearman1904general}.

  \item \textbf{Pearson correlation}~\cite{Gildenblat_Pahnke_2025}:
  \[
  \textit{Pearson}(\mathbf{D}_{\mathrm{test}}, \mathbf{D}_{\mathrm{test}}^{\mathrm{emb}}) := \mathrm{corr}\left( \mathrm{vec}(\mathbf{D}_{\mathrm{test}}),\ \mathrm{vec}(\mathbf{D}_{\mathrm{test}}^{\mathrm{emb}}) \right),
  \]
  using the Pearson linear correlation~\cite{pearson1895vii} between flattened test--train distance vectors.

\end{itemize}

For robustness, we averaged local metrics over different neighborhood sizes, ranging from $K=5$ to $K=\sqrt{N_{\mathrm{train}}}$, in steps of 10.

\subsection{Illustration of our structural importance alignment framework}\label{sec:sia_illustration}

Fig.~\ref{fig:sia_illustration} illustrates our SIA framework for evaluating supervised OOS embedding quality using the Sign MNIST (A--K) dataset (Table~\ref{tab:rfae_data}). While both RF-AE and P-TSNE produce locally plausible embeddings, their ability to preserve class-relevant structure differs significantly. RF-AE emphasizes informative regions—such as hand and finger contours—while suppressing irrelevant background variations. In contrast, P-TSNE often attributes high structural importance to background pixels, leading to poorer alignment with classification-relevant features. This discrepancy is reflected in the final local SIA scores: RF-AE achieves a much higher alignment (0.89) than P-TSNE (0.62), confirming that RF-AE better preserves the semantic structure needed for accurate classification in OOS settings. These findings support our qualitative observations from Section~\ref{subsec:oos_viz}.
\begin{figure}[h]
    \centering
    \includegraphics[width = 1\textwidth]{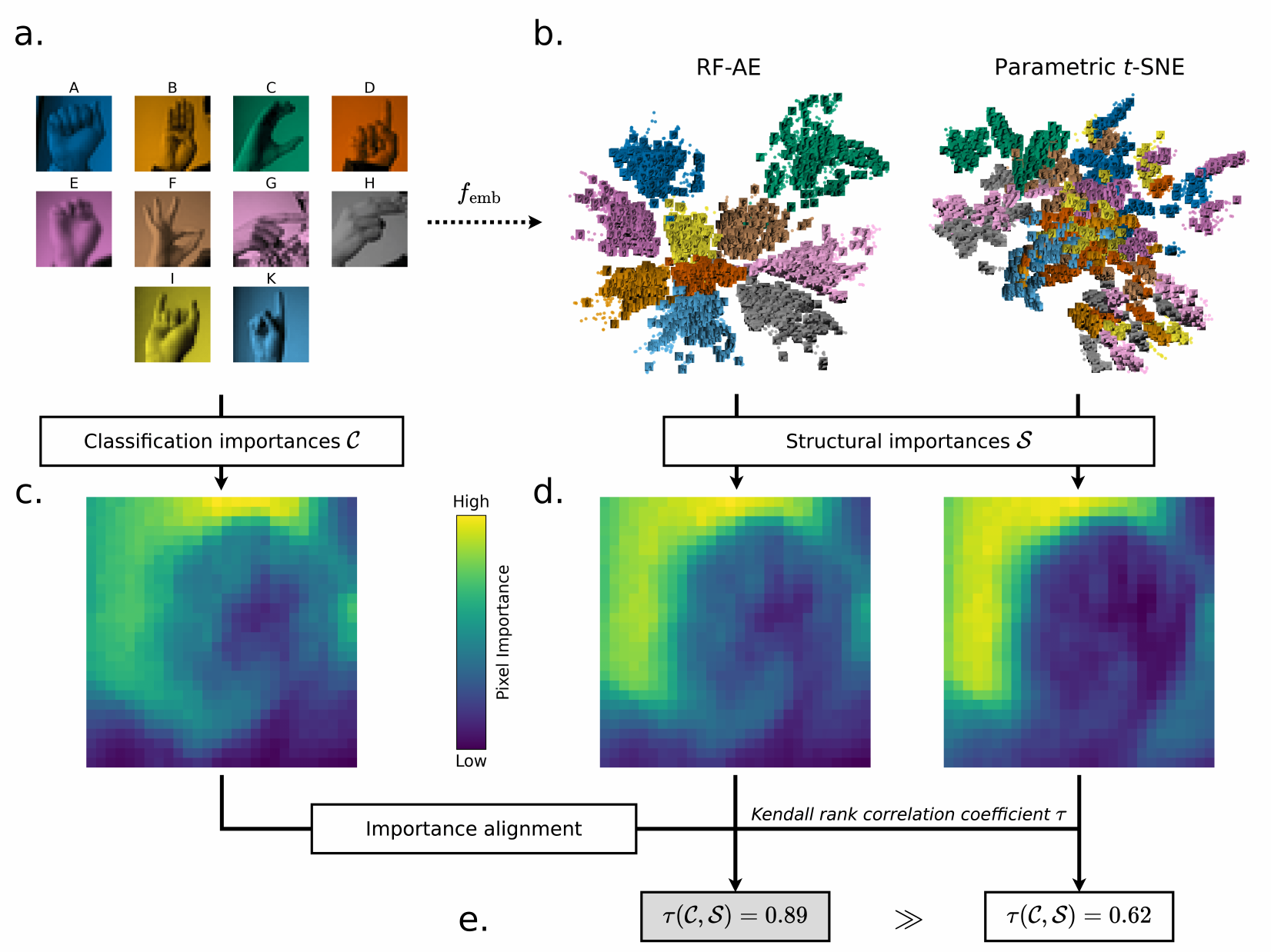}
    \caption{Illustration of the structural importance alignment (SIA) score defined in Section~\ref{subsec:quantify_oos_embedding} for evaluating supervised out-of-sample (OOS) embedding fit. \textbf{a.} Random class samples from the high-dimensional Sign MNIST (A--K) dataset. \textbf{b.} 2D embeddings of training and test (OOS) points from RF-AE (left) and P-TSNE (right), based on a stratified 80/20 random split. Training points are color-coded by label; test points are shown as tinted image thumbnails. \textbf{c.} Pixel-level classification importances from the ensemble baseline classifier (Section~\ref{subsec:quantify_oos_embedding}, Appendix~\ref{sec:baseline_cls_perf}), normalized to $[0,1]$. \textbf{d.} Pixel-level local structure importances ($s=\textit{Trust}$) from OOS RF-AE (left) and P-TSNE (right), also normalized. \textbf{e.} Local SIA scores computed as the Kendall $\tau$ correlation between (c) and (d): RF-AE achieves higher alignment (0.89) than P-TSNE (0.62), suppressing background pixels and focusing on class-relevant regions.}
    \label{fig:sia_illustration}
\end{figure}

\subsection{Baseline classifiers' hyperparameters and performance}\label{sec:baseline_cls_perf}

Since ground-truth classification importances are rarely available, our SIA framework (Section~\ref{subsec:quantify_oos_embedding}) uses a baseline classifier $f_{\mathrm{cls}}$ to derive pseudo-ground-truth importances. To ensure these importances are meaningful, $f_{\mathrm{cls}}$ must achieve reasonably high test accuracy. Table~\ref{tab:acc_summary} reports per-dataset accuracies for both $f_\mathrm{cls}=k$-NN and an ensemble classifier $f_\mathrm{cls} = k$-NN + SVM + MLP combining $k$-NN, SVM, and MLP predictions via equal-weight voting. We use $k = \sqrt{N_{\mathrm{train}}}$ for the $k$-NN classifier. The SVM is implemented using \texttt{scikit-learn}’s \texttt{LinearSVC}~\cite{scikit-learn} with default hyperparameters. The MLP is a two-layer feedforward network with hidden dimensions $h_1 = \lfloor \tfrac{2}{3} \cdot D \rfloor$ and $h_2 = \lfloor \tfrac{1}{3} \cdot D \rfloor$, where $D$ is the input dimensionality. Each hidden layer is followed by ReLU activation, dropout (rate 0.2), and layer normalization. Weight normalization is applied to the first two linear layers. The final layer is a standard linear projection without activation.

We find that the ensemble consistently improves upon standalone $k$-NN and achieves above 60\% accuracy on all datasets, making it a suitable proxy for generating classification importances. Nonetheless, $k$-NN alone performs reasonably well, falling below 60\% accuracy on OrganC MNIST dataset. For a detailed comparison of SIA scores using $k$-NN instead of the ensemble, see Section~\ref{sec:sia_varying_imp}.

\begin{table*}[ht]
\small
\caption{Average test classification accuracy (mean $\pm$ std) per dataset (see Appendix~\ref{sec:data}), using a single $k$-NN classifier (left column) and an ensemble of $k$-NN, linear SVM, and MLP classifiers (right column). The ensemble generally outperforms the standalone $k$-NN, making it a robust reference for generating classification feature importances.}
\label{tab:acc_summary}
\vskip 0.15in
\centering
\begin{tabular}{lcc}
\toprule
\textsc{Dataset} & \textsc{$k$-NN} & \textsc{$k$-NN + SVM + MLP} \\
\midrule
\textsc{QSAR Biodegradation} & 0.835 $\pm$ 0.032 & 0.854 $\pm$ 0.028 \\
\textsc{Blood MNIST} & 0.742 $\pm$ 0.000 & 0.761 $\pm$ 0.049 \\
\textsc{Cardiotocography} & 0.662 $\pm$ 0.026 & 0.681 $\pm$ 0.023 \\
\textsc{Chess} & 0.914 $\pm$ 0.012 & 0.942 $\pm$ 0.012 \\
\textsc{Diabetic Retinopathy Debrecen} & 0.657 $\pm$ 0.043 & 0.686 $\pm$ 0.031 \\
\textsc{Fashion MNIST (test)} & 0.777 $\pm$ 0.007 & 0.827 $\pm$ 0.007 \\
\textsc{GTZAN (3-sec)} & 0.708 $\pm$ 0.006 & 0.678 $\pm$ 0.013 \\
\textsc{HAR (Using Smartphones)} & 0.887 $\pm$ 0.000 & 0.917 $\pm$ 0.009 \\
\textsc{Isolet} & 0.906 $\pm$ 0.000 & 0.931 $\pm$ 0.004 \\
\textsc{Landsat Satellite} & 0.858 $\pm$ 0.000 & 0.829 $\pm$ 0.010 \\
\textsc{MNIST (test)} & 0.894 $\pm$ 0.008 & 0.922 $\pm$ 0.006 \\
\textsc{Obesity} & 0.625 $\pm$ 0.018 & 0.661 $\pm$ 0.026 \\
\textsc{Optical Burst Switching Network} & 0.745 $\pm$ 0.028 & 0.743 $\pm$ 0.023 \\
\textsc{Optical Digits} & 0.953 $\pm$ 0.000 & 0.948 $\pm$ 0.004 \\
\textsc{OrganC MNIST} & 0.473 $\pm$ 0.000 & 0.627 $\pm$ 0.004 \\
\textsc{Sign MNIST (A--K)} & 0.908 $\pm$ 0.007 & 0.940 $\pm$ 0.008 \\
\textsc{Spambase} & 0.860 $\pm$ 0.008 & 0.875 $\pm$ 0.008 \\
\textsc{Sports Articles} & 0.809 $\pm$ 0.019 & 0.818 $\pm$ 0.019 \\
\textsc{USPS} & 0.871 $\pm$ 0.000 & 0.891 $\pm$ 0.002 \\
\textsc{Waveform} & 0.848 $\pm$ 0.012 & 0.860 $\pm$ 0.014 \\
\bottomrule
\end{tabular}
\end{table*}

\section{Description of the datasets}\label{sec:data}

Table~\ref{tab:rfae_data} provides additional details on the datasets used for the quantitative and qualitative comparisons between RF-AE and other methods. Sign MNIST (A–K)~\cite{sign_language_mnist}, MNIST (test subset)~\cite{lecun2010mnist}, Fashion MNIST (test subset)~\cite{xiao2017fashionmnist}, GTZAN (3-second version)~\cite{sturm2013gtzan} and USPS~\cite{USPS} were obtained from \href{https://www.kaggle.com}{Kaggle}. The Sign MNIST (A–K) dataset is a subset of the original, containing the first 10 letters (excluding J, which requires motion). Blood MNIST and OrganC MNIST (Med MNIST family~\cite{medmnistv1, medmnistv2}) were obtained from \href{https://zenodo.org/records/10519652}{Zenodo}. 
All other datasets are publicly available from the \href{https://archive.ics.uci.edu/} {UCI Machine Learning Repository}.

\begin{table*}[ht]
\small
\caption{Description of the 20 datasets used in our experiments, grouped by data modality.}
\label{tab:rfae_data}
\vskip 0.15in
\centering
\begin{tabular}{lcccc}
\toprule
\textsc{Dataset} & \textsc{Size} & \textsc{Test \%} & \textsc{Dimensions} & \textsc{Classes} \\
\midrule
\multicolumn{5}{l}{\textsc{Tabular / Clinical}} \\
\quad\textsc{Cardiotocography} & 2126 & 0.20 & 21 & 10 \\
\quad\textsc{Diabetic Retinopathy Debrecen} & 1151 & 0.20 & 19 & 2 \\
\quad\textsc{Obesity} & 2111 & 0.20 & 16 & 7 \\
\quad\textsc{QSAR Biodegradation} & 1055 & 0.20 & 41 & 2 \\[0.8ex]
\multicolumn{5}{l}{\textsc{Text / NLP}} \\
\quad\textsc{Spambase} & 4601 & 0.20 & 57 & 2 \\
\quad\textsc{Sports Articles} & 1000 & 0.20 & 59 & 2 \\[0.8ex]
\multicolumn{5}{l}{\textsc{Sensor / Time Series}} \\
\quad\textsc{HAR (Using Smartphones)} & 10299 & 0.29 & 561 & 6 \\
\quad\textsc{Isolet} & 7797 & 0.20 & 617 & 26 \\
\quad\textsc{Waveform} & 5000 & 0.20 & 40 & 3 \\
\quad\textsc{Landsat Satellite} & 6435 & 0.31 & 36 & 6 \\[0.8ex]
\multicolumn{5}{l}{\textsc{Image (General)}} \\
\quad\textsc{Optical Digits} & 5620 & 0.32 & 64 & 10 \\
\quad\textsc{USPS} & 9298 & 0.22 & 256 & 10 \\
\quad\textsc{MNIST (test)} & 10000 & 0.20 & 784 & 10 \\
\quad\textsc{Fashion MNIST (test)} & 10000 & 0.20 & 784 & 10 \\
\quad\textsc{Sign MNIST (A--K)} & 14482 & 0.20 & 784 & 10 \\[0.8ex]
\multicolumn{5}{l}{\textsc{Image (Biomedical)}} \\
\quad\textsc{Blood MNIST} & 15380 & 0.22 & 2352 & 8 \\
\quad\textsc{OrganC MNIST} & 21191 & 0.39 & 784 & 11 \\[0.8ex]
\multicolumn{5}{l}{\textsc{Audio}} \\
\quad\textsc{GTZAN (3-sec)} & 9990 & 0.20 & 57 & 10 \\[0.8ex]
\multicolumn{5}{l}{\textsc{Network / Traffic}} \\
\quad\textsc{Optical Burst Switching Network} & 1060 & 0.20 & 21 & 4 \\[0.8ex]
\multicolumn{5}{l}{\textsc{Games / Logic}} \\
\quad\textsc{Chess} & 3196 & 0.20 & 36 & 2 \\[0.8ex]
\bottomrule
\end{tabular}
\end{table*}

\section{Experimental setting}\label{sec:rfae_exp_setting}

\subsection{Model implementations and hyperparameters}\label{subsec:exp_setting}
Unless otherwise specified, all methods were run with their default hyperparameters in our experiments.

\begin{itemize}
\item \textbf{RF-AE:} Implemented in PyTorch. The encoder \( f \) consisted of three hidden layers with sizes 800, 400, and 100. The bottleneck layer was set to dimension 2 for visualization. The decoder \( g \) was symmetric with layers of sizes 100, 400, and 800, followed by an output layer matching the input dimensionality. ELU activations were used throughout, except for the bottleneck (identity) and output (softmax) layers. Training was performed using the AdamW optimizer~\cite{loshchilov2018decoupled} with a learning rate of \(10^{-3}\), batch size of 512, weight decay of \(10^{-5}\), and 200 epochs without early stopping. We set the default $\lambda$ and $N^*$ to $0.01$ and $0.1N_{\mathrm{train}}$, respectively.

\item \textbf{SSNP, CE, and vanilla AE:} Implemented using the same architecture and activations as RF-AE. For SSNP, we followed the authors’ recommendations: a sigmoid output activation and a reconstruction-classification loss balance of 0.5. For CE and vanilla AE, the output activation was the identity function.

\item \textbf{Parametric $t$-SNE and UMAP:} Implemented following Damrich et al.~\cite{damrich2023from}, using the InfoNCE loss~\cite{oord2018representation}; available at \url{https://github.com/sdamrich/cl-tsne-umap}.

\item \textbf{Parametric supervised UMAP:} Official implementation from \url{https://github.com/lmcinnes/umap}.

\item \textbf{PaCMAP:} From \url{https://github.com/YingfanWang/PaCMAP}.

\item \textbf{CEBRA:} From \url{https://github.com/AdaptiveMotorControlLab/CEBRA}. We used 200 training epochs and a batch size of 512, as recommended by the authors.

\item \textbf{SPCA:} From \url{https://github.com/bghojogh/Principal-Component-Analysis}.

\item \textbf{PCA, NCA, and PLS-DA:} Implemented using the \texttt{scikit-learn} library~\cite{scikit-learn}.
\end{itemize}


\subsection{Compute resources}
\label{subsec:exp-compute-resource}
Experiments were conducted on a shared computing environment with access to both GPU and CPU resources. For models requiring GPU acceleration, we used:
\begin{itemize}
\item 1 GPU with at least 40 GB of memory (e.g., NVIDIA A100 40GB, H100 80GB, or equivalent),
\item 1 CPU with 128 GB of RAM.
\end{itemize}
For models that do not require GPU acceleration, computations were performed using CPU only, with a minimum of 128 GB of RAM.

We conducted experiments across 20 datasets for our RF-AE model and 13 baseline methods, using multiple random seeds to report the mean and standard deviation of performance metrics. All hyperparameters and configurations were managed using Hydra~\cite{Yadan2019Hydra}. Code and configuration files will be released to ensure full reproducibility.

The runtime for RF-AE training and the entire evaluation process for individual experiments, where each experiment is defined as running one model on one dataset with a single random seed, ranged from 1 to 6 hours depending on the dataset size.

\section{Ablation experiments}\label{sec:ablation_supplemental}
We performed ablation experiments on the two main RF-AE hyperparameters: the loss balancing parameter $\lambda$ and the number of selected landmarks $N^*$. We report local/global SIA scores and $k$-NN accuracies across combinations $(\lambda, N^*) \in \{1, 0.1, 0.01, 0.001, 0\} \times \{pN_{\mathrm{train}} \mid p = 0.02, 0.05, 0.1, 0.2, 1\}$ in Table~\ref{tab:rfae_ablation_corrected}. Surprisingly, reducing the number of selected prototypes leads to overall improvements in both $k$-NN accuracy while preserving SIA. We hypothesize that this may be attributed to the reduced input dimensionality of the RF-AE network, which effectively lowers its complexity and introduces additional implicit regularization. Furthermore, selecting only the most representative instances per class may help denoise the training process, thereby enhancing the model’s ability to preserve class-relevant features in the embedding space.

For the loss balancing hyperparameter, setting $\lambda = 1$ (i.e., an unconstrained RF-AE) yields relatively high accuracy but results in a substantial decline in supervised structure preservation. This is expected, as unconstrained autoencoders have been shown to poorly capture the underlying data geometry~\cite{duque2022geometry, Nazari2023geometric}. On the other hand, $\lambda = 0$, which corresponds to the RF-PHATE kernel-based MLP, leads to both lower accuracy and diminished global SIA, offering no improvement over the standard RF-PHATE extension results reported in Table~\ref{tab:quantitative_comp}.

Across a broad range of hyperparameters—specifically, $\lambda \in \{0.1, 0.01, 0.001\}$ and $N^* \in \{pN_{\mathrm{train}} \mid p = 0.02, 0.05, 0.1, 0.2, 1\}$—RF-AE consistently ranks among the top 3 methods across all metrics in Table~\ref{tab:quantitative_comp}, highlighting its strong robustness to hyperparameter choices.

\begin{table}[ht]
\caption{Local ($s=\textit{QNX},\textit{Trust}$) and global ($s=\textit{Spear},\textit{Pearson}$) SIA scores, and test $k$-NN accuracy for RF-AE variants across values of $\lambda$ and $N^*$. Scores are shown as mean ± std across 20
datasets and 10 repetitions. Each score is compared with baseline models in Table~\ref{tab:quantitative_comp}, and highlighted only if it ranks among the top three overall. Top three values per metric are highlighted using underlined bold italic (first), bold italic (second), and italic (third). RF-AE demonstrates strong robustness for $\lambda \in \{0.1, 0.01, 0.001\}$ across varying prototype count, consistently ranking among the top 3 methods. Fewer prototypes improve $k$-NN accuracy while preserving SIA, likely due to implicit regularization and class-level denoising. In contrast, extreme $\lambda$ values lead to degraded SIA ($\lambda = 1$) or both SIA and accuracy ($\lambda = 0$)}
\label{tab:rfae_ablation_corrected}
\centering
\small
\begin{sc}
\begin{tabular}{lccccc}
\toprule
 & \multicolumn{2}{c}{Local SIA} & \multicolumn{2}{c}{Global SIA} &  \\
\cmidrule(r){2-5}
 & QNX & Trust & Spear & Pearson & $k$-NN Acc \\
\midrule
\multicolumn{6}{l}{\textbf{\boldmath$\lambda=0$}} \\
$N^* = 2\%$ & \textbf{\textit{0.805 ± 0.025}} & \textbf{\textit{0.819 ± 0.027}} & 0.746 ± 0.041 & \textit{0.750 ± 0.042} & \underline{\textbf{\textit{0.810 ± 0.010}}} \\
$N^* = 5\%$ & \textbf{\textit{0.804 ± 0.026}} & \textbf{\textit{0.818 ± 0.025}} & \textit{0.749 ± 0.038} & \textit{0.750 ± 0.042} & \textbf{\textit{0.808 ± 0.010}} \\
$N^* = 10\%$ & \textbf{\textit{0.803 ± 0.027}} & \textit{0.817 ± 0.025} & 0.747 ± 0.038 & 0.749 ± 0.042 & \textbf{\textit{0.806 ± 0.010}} \\
$N^* = 20\%$ & \textbf{\textit{0.805 ± 0.026}} & \textbf{\textit{0.818 ± 0.025}} & 0.747 ± 0.039 & 0.749 ± 0.042 & \textbf{\textit{0.798 ± 0.010}} \\
$N^* = 100\%$ & \textit{0.802 ± 0.026} & \textit{0.817 ± 0.025} & \textit{0.749 ± 0.040} & 0.749 ± 0.043 & 0.792 ± 0.010 \\
\midrule
\multicolumn{6}{l}{\textbf{\boldmath$\lambda=0.001$}} \\
$N^* = 2\%$ & \textbf{\textit{0.802 ± 0.026}} & \textbf{\textit{0.820 ± 0.026}} & \textit{0.759 ± 0.042} & \textit{0.755 ± 0.046} & \underline{\textbf{\textit{0.830 ± 0.010}}} \\
$N^* = 5\%$ & \textbf{\textit{0.800 ± 0.024}} & \textit{0.817 ± 0.026} & \textit{0.764 ± 0.040} & \textit{0.757 ± 0.043} & \underline{\textbf{\textit{0.832 ± 0.009}}} \\
$N^* = 10\%$ & \textbf{\textit{0.801 ± 0.025}} & \textbf{\textit{0.819 ± 0.026}} & \textit{0.763 ± 0.040} & \textbf{\textit{0.760 ± 0.044}} & \underline{\textbf{\textit{0.827 ± 0.010}}} \\
$N^* = 20\%$ & \textbf{\textit{0.799 ± 0.026}} & \textbf{\textit{0.820 ± 0.026}} & \textit{0.760 ± 0.043} & \textit{0.755 ± 0.046} & \underline{\textbf{\textit{0.825 ± 0.009}}} \\
$N^* = 100\%$ & \textit{0.797 ± 0.028} & \textbf{\textit{0.819 ± 0.027}} & \textit{0.757 ± 0.042} & \textit{0.757 ± 0.044} & \textbf{\textit{0.802 ± 0.013}} \\
\midrule
\multicolumn{6}{l}{\textbf{\boldmath$\lambda=0.01$}} \\
$N^* = 2\%$ & \textbf{\textit{0.801 ± 0.026}} & \textbf{\textit{0.819 ± 0.025}} & \underline{\textbf{\textit{0.772 ± 0.043}}} & \underline{\textbf{\textit{0.774 ± 0.044}}} & \underline{\textbf{\textit{0.833 ± 0.012}}} \\
$N^* = 5\%$ & \textbf{\textit{0.801 ± 0.025}} & \textbf{\textit{0.818 ± 0.024}} & \underline{\textbf{\textit{0.775 ± 0.045}}} & \underline{\textbf{\textit{0.775 ± 0.047}}} & \underline{\textbf{\textit{0.836 ± 0.011}}} \\
$N^* = 10\%$ & \textbf{\textit{0.800 ± 0.025}} & \textbf{\textit{0.818 ± 0.025}} & \underline{\textbf{\textit{0.776 ± 0.051}}} & \underline{\textbf{\textit{0.776 ± 0.050}}} & \underline{\textbf{\textit{0.831 ± 0.011}}} \\
$N^* = 20\%$ & \textbf{\textit{0.800 ± 0.026}} & \textbf{\textit{0.818 ± 0.026}} & \underline{\textbf{\textit{0.778 ± 0.039}}} & \underline{\textbf{\textit{0.776 ± 0.039}}} & \underline{\textbf{\textit{0.830 ± 0.011}}} \\
$N^* = 100\%$ & 0.794 ± 0.025 & \textit{0.816 ± 0.026} & \textbf{\textit{0.766 ± 0.046}} & \underline{\textbf{\textit{0.769 ± 0.043}}} & \underline{\textbf{\textit{0.809 ± 0.013}}} \\
\midrule
\multicolumn{6}{l}{\textbf{\boldmath$\lambda=0.1$}} \\
$N^* = 2\%$ & \textbf{\textit{0.800 ± 0.027}} & \textbf{\textit{0.819 ± 0.024}} & \textbf{\textit{0.767 ± 0.060}} & \underline{\textbf{\textit{0.780 ± 0.050}}} & \underline{\textbf{\textit{0.834 ± 0.010}}} \\
$N^* = 5\%$ & \textbf{\textit{0.800 ± 0.025}} & \textbf{\textit{0.818 ± 0.025}} & \underline{\textbf{\textit{0.776 ± 0.062}}} & \textit{0.783 ± 0.055} & \underline{\textbf{\textit{0.836 ± 0.012}}} \\
$N^* = 10\%$ & \textbf{\textit{0.798 ± 0.026}} & \textit{0.817 ± 0.025} & \underline{\textbf{\textit{0.785 ± 0.045}}} & \underline{\textbf{\textit{0.788 ± 0.045}}} & \underline{\textbf{\textit{0.831 ± 0.011}}} \\
$N^* = 20\%$ & \textbf{\textit{0.800 ± 0.025}} & \textbf{\textit{0.818 ± 0.024}} & \underline{\textbf{\textit{0.784 ± 0.043}}} & \textbf{\textit{0.786 ± 0.044}} & \underline{\textbf{\textit{0.831 ± 0.010}}} \\
$N^* = 100\%$ & 0.789 ± 0.028 & \textit{0.813 ± 0.029} & \underline{\textbf{\textit{0.779 ± 0.057}}} & \textit{0.783 ± 0.054} & \textbf{\textit{0.807 ± 0.015}} \\
\midrule
\multicolumn{6}{l}{\textbf{\boldmath$\lambda=1$}} \\
$N^* = 2\%$ & \textbf{\textit{0.802 ± 0.026}} & \textbf{\textit{0.818 ± 0.026}} & 0.669 ± 0.112 & 0.672 ± 0.125 & \underline{\textbf{\textit{0.838 ± 0.009}}} \\
$N^* = 5\%$ & \textbf{\textit{0.800 ± 0.027}} & \textit{0.815 ± 0.024} & 0.681 ± 0.117 & 0.680 ± 0.115 & \underline{\textbf{\textit{0.839 ± 0.010}}} \\
$N^* = 10\%$ & \textbf{\textit{0.799 ± 0.026}} & \textit{0.816 ± 0.026} & 0.701 ± 0.105 & 0.700 ± 0.108 & \underline{\textbf{\textit{0.837 ± 0.011}}} \\
$N^* = 20\%$ & \textbf{\textit{0.799 ± 0.025}} & \textit{0.814 ± 0.024} & 0.698 ± 0.095 & 0.696 ± 0.105 & \underline{\textbf{\textit{0.831 ± 0.011}}} \\
$N^* = 100\%$ & 0.793 ± 0.028 & \textit{0.808 ± 0.029} & 0.692 ± 0.078 & 0.696 ± 0.068 & 0.772 ± 0.021 \\
\bottomrule
\end{tabular}
\end{sc}
\end{table}

Fig.~\ref{fig:sign_ablation_plot} visually demonstrates the impact of varying $(\lambda, N^*) \in \{1, 0.1,0.01,0.001,0\} \times \{ pN_{\mathrm{train}}\mid p=0.02,0.05,0.1,0.2,1\}$ on Sign MNIST (A--K). Visually, the main source of variation is the loss balancing hyperparameter $\lambda$. The number of selected prototypes has negligeable impact. When RF-AE is unconstrained ($\lambda = 1$, first column), the resulting embeddings appear more distorted and less structured. This is consistent with recent findings showing that unregularized autoencoders often fail to produce human-interpretable visualizations that preserve the intrinsic geometry of the data~\cite{duque2022geometry, Nazari2023geometric}. On the contrary, full geometric constraint ($\lambda = 0$, last column) simply replicates the RF-PHATE embedding, without clear qualitative benefits compared to the default linear kernel extension (Fig.~\ref{fig:viz_main}). To effectively balance reconstruction and geometric losses, the optimal range for $\lambda$ lies approximately between 0.001—yielding branching structures akin to RF-PHATE but with more pronounced separation—and 0.1, which produces more compact, globular embeddings with enhanced class separability. A similar qualitative assessment can be made for OrganC MNIST in Fig.~\ref{fig:organ_ablation_plot}.

\begin{figure*}[t]
    \centering
    \includegraphics[width = 1\textwidth]{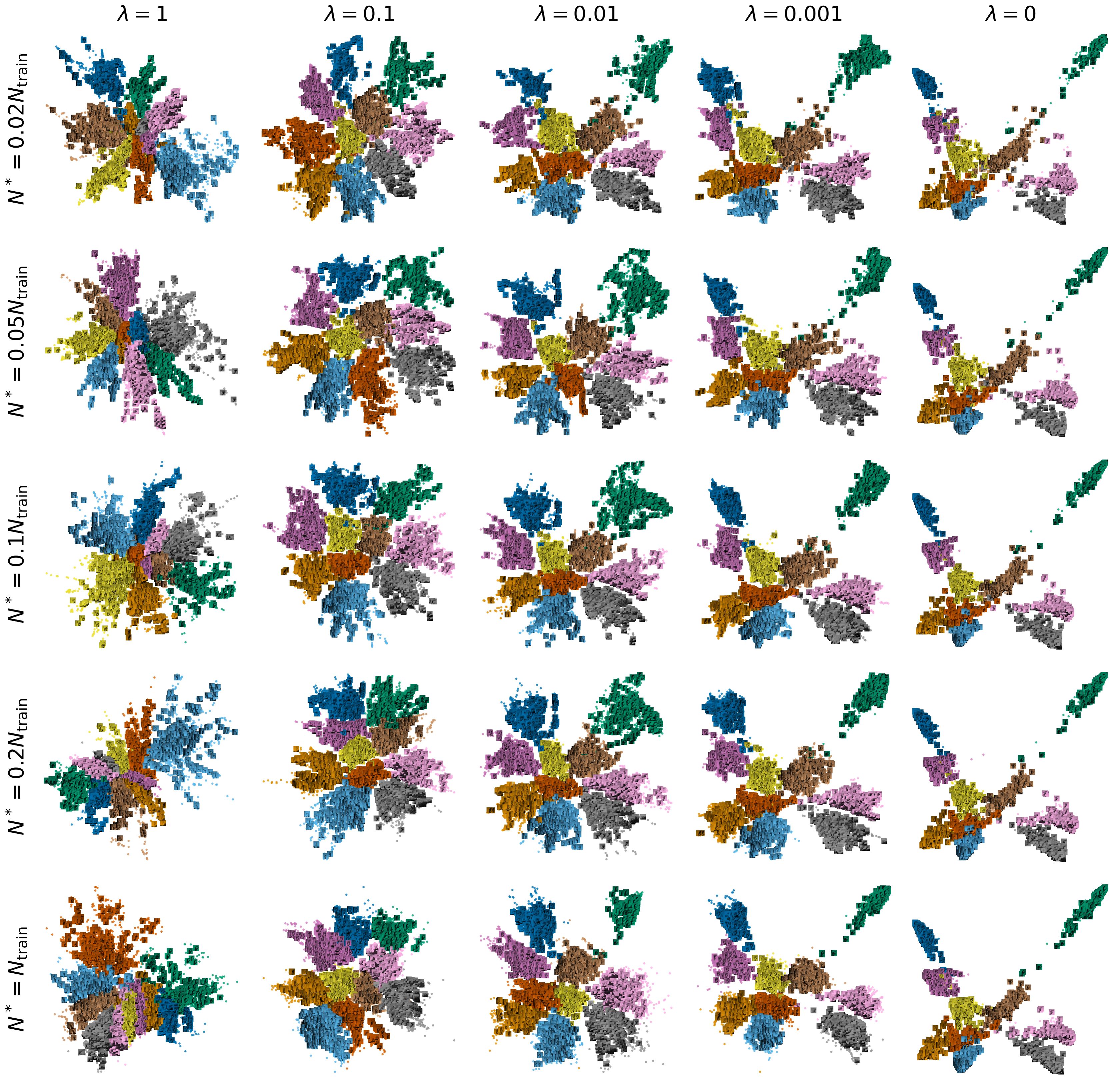}
    \caption{RF-AE training and test embeddings on Sign MNIST (A--K) for various $(\lambda, N^*)$ configurations, where $\lambda$ decreases column-wise from 1 (unconstrained RF-AE) to 0 (RF-PHATE kernel-based MLP extension), and $N^*$ increases row-wise from 2\% to 100\% of the training set size.}
    \label{fig:sign_ablation_plot}
\end{figure*}

\begin{figure*}[t]
    \centering
    \includegraphics[width = 1\textwidth]{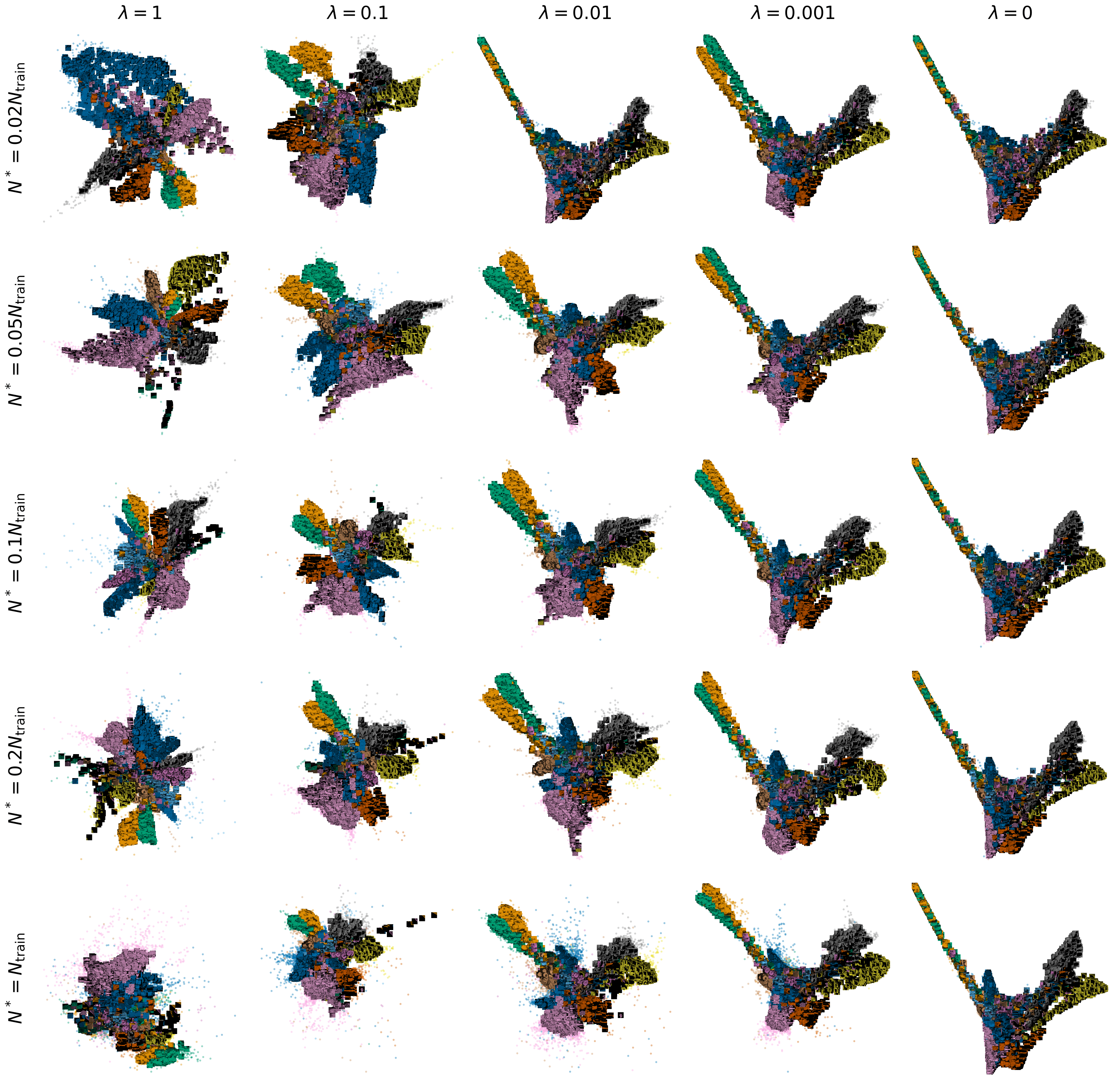}
    \caption{RF-AE training and test embeddings on OrganC MNIST for various $(\lambda, N^*)$ configurations, where $\lambda$ decreases column-wise from 1 (unconstrained RF-AE) to 0 (RF-PHATE kernel-based MLP extension), and $N^*$ increases row-wise from 2\% to 100\% of the training set size.}
    \label{fig:organ_ablation_plot}
\end{figure*}

From these results, we draw two practical guidelines to help users select suitable hyperparameters for their specific application:
\begin{itemize}
    \item \textbf{Prototype selection $N^*$:} Selecting as few as 2\% of training points as prototypes is a good starting point to preserve supervised structure while maximizing class separability. If minimizing training and inference time is essential, users may further reduce the number of selected prototypes to accelerate computation.
    \item \textbf{Loss balancing parameter $\lambda$:} Values of $\lambda$ in the range $[0.001, 0.1]$ yield comparable embedding quality but differ in qualitative behavior. Lower values (e.g., $\lambda \approx 0.001$) produce branching structures similar to RF-PHATE, enhancing interpretability of within-class transitions. Higher values (e.g., $\lambda \approx 0.1$) shift the focus toward class separability, resulting in more globular, well-separated clusters. We recommend $\lambda \approx 0.001$ when transitional or trajectory-like structure is expected, and $\lambda \approx 0.1$ when class boundaries are the primary concern.
\end{itemize}

We also report the training and test time of RF-AE models with varying prototype percentages on the OrganC MNIST dataset (subset), as shown in Figure~\ref{fig:time}. Using a smaller percentage of prototypes significantly reduces computation time, achieving up to a 2 times speedup in training compared to using all data points. For extending embeddings to OOS data points, prototype selection reduces test time by up to 100 times, highlighting its effectiveness in accelerating inference. This demonstrates the computational advantage of prototype selection in the RF-AE framework, particularly for improving test-time efficiency in OOS extension. However, RF-AE still faces scalability challenges, as generating RF-GAP proximity matrices remains computationally intensive. As the dataset size grows, the overall training process can become increasingly time-consuming.

\begin{figure}[!htb]
    \centering
    \includegraphics[width = \textwidth]{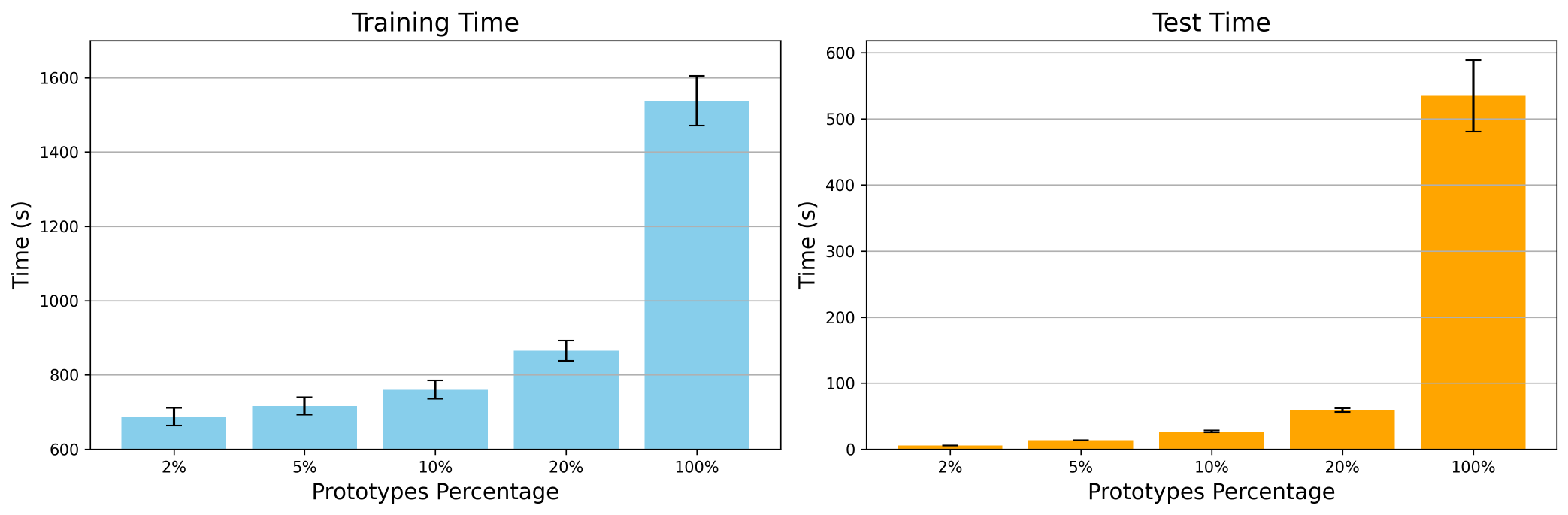}
    \caption{Computation time for OrganC MNIST dataset of RF-AE models with varying prototype percentages.}
    \label{fig:time}
\end{figure}


\section{SIA performance comparison under different classification importance strategies}\label{sec:sia_varying_imp}

To show that RF-AE’s performance is not dependent on the choice classification importances $\mathcal{C}_i$ (Section~\ref{subsec:quantify_oos_embedding}), we repeated the quantitative analysis from Section~\ref{subsec:quant_comp} using two alternative strategies:
\begin{itemize}
\item \textbf{$k$-NN strategy:} We replaced our baseline ensemble classifier with a standalone $k$-NN model.
\item \textbf{Aggregate strategy:} Instead of deriving feature importances from the ensemble’s accuracy drop, we computed importances independently using each of the three classifiers—$k$-NN, SVM, and MLP—resulting in the following sets:
\begin{align*}
\mathcal{C}^{\text{$k$-NN}} &= \{\mathcal{C}_i^{\text{$k$-NN}} \mid i = 1, \ldots, D\},\\
\mathcal{C}^{\text{SVM}} &= \{\mathcal{C}_i^{\text{SVM}} \mid i = 1, \ldots, D\},\\
\mathcal{C}^{\text{MLP}} &= \{\mathcal{C}_i^{\text{MLP}} \mid i = 1, \ldots, D\}.
\end{align*}
We then averaged these to obtain an aggregated importance set:
\[
\mathcal{C}^{\text{agg}} = \frac{1}{3} \left( \mathcal{C}^{\text{$k$-NN}} + \mathcal{C}^{\text{SVM}} + \mathcal{C}^{\text{MLP}} \right).
\]
\end{itemize}

Table~\ref{tab:sia_varying_imp} reports local and global SIA scores for RF-AE and 13 baseline methods using our proposed ensemble classifier (Sections~\ref{subsec:quantify_oos_embedding} and~\ref{subsec:quant_comp}) as well as the two alternative importance strategies. Overall, RF-AE consistently ranks among the top three methods across all metrics, regardless of the chosen importance strategy. This suggests that RF-AE more effectively preserves the underlying important structure, making it more likely to reflect meaningful feature hierarchies in its embeddings compared to other baselines.

\begin{table}[ht]
\caption{Local ($s=\textit{QNX}, \textit{Trust}$) and global ($s=\textit{Spear}, \textit{Pearson}$) SIA scores for RF-AE and 13 baseline methods, computed using two strategies: $k$-NN-based classification importances (top) and aggregated importances averaged over $k$-NN, SVM, and MLP classifiers (bottom). Scores are reported as mean $\pm$ standard deviation across 20 datasets and 10 repetitions. In general, RF-AE outperforms other models in both local and global SIA, regardless of the importance strategy. Top three values in each column are highlighted with underlined bold italics (first), bold italics (second), and italics (third). Supervised methods are marked by an asterisk.}
\label{tab:sia_varying_imp}
\centering
\small
\begin{sc}
\begin{tabular}{lcccc}
\toprule
 & \multicolumn{2}{c}{Local SIA} & \multicolumn{2}{c}{Global SIA} \\
 \cmidrule(r){2-5}
 & QNX & Trust & Spear & Pearson \\
\midrule
\multicolumn{5}{c}{Ensemble Importances} \\
RF-AE* & \textbf{\textit{0.800 ± 0.025}} & \textbf{\textit{0.818 ± 0.025}} & \underline{\textbf{\textit{0.776 ± 0.051}}} & \underline{\textbf{\textit{0.776 ± 0.050}}} \\
SSNP* & 0.760 ± 0.047 & 0.772 ± 0.045 & 0.685 ± 0.089 & 0.694 ± 0.080 \\
P-SUMAP* & 0.756 ± 0.028 & 0.768 ± 0.025 & 0.647 ± 0.048 & 0.647 ± 0.048 \\
CE* & 0.795 ± 0.050 & \textit{0.818 ± 0.051} & \textit{0.765 ± 0.051} & \textbf{\textit{0.763 ± 0.054}} \\
RF-PHATE* & \textit{0.798 ± 0.027} & \underline{\textbf{\textit{0.823 ± 0.025}}} & 0.749 ± 0.040 & 0.750 ± 0.043 \\
NCA* & \underline{\textbf{\textit{0.808 ± 0.027}}} & 0.805 ± 0.025 & \textbf{\textit{0.771 ± 0.032}} & \textit{0.759 ± 0.033} \\
PACMAP & 0.749 ± 0.026 & 0.758 ± 0.025 & 0.688 ± 0.029 & 0.688 ± 0.029 \\
P-TSNE & 0.743 ± 0.028 & 0.747 ± 0.028 & 0.684 ± 0.036 & 0.666 ± 0.038 \\
AE & 0.744 ± 0.027 & 0.751 ± 0.029 & 0.695 ± 0.044 & 0.655 ± 0.053 \\
P-UMAP & 0.757 ± 0.027 & 0.744 ± 0.028 & 0.674 ± 0.035 & 0.657 ± 0.038 \\
SPCA* & 0.767 ± 0.026 & 0.759 ± 0.030 & 0.741 ± 0.031 & 0.738 ± 0.032 \\
PLS-DA* & 0.715 ± 0.026 & 0.708 ± 0.028 & 0.659 ± 0.027 & 0.639 ± 0.028 \\
CEBRA* & 0.780 ± 0.045 & 0.775 ± 0.050 & 0.735 ± 0.062 & 0.728 ± 0.068 \\
PCA & 0.745 ± 0.027 & 0.742 ± 0.026 & 0.733 ± 0.027 & 0.727 ± 0.028 \\
\\
\multicolumn{5}{c}{Standalone $k$-NN Importances} \\
RF-AE* & \textit{0.826 ± 0.025} & \textbf{\textit{0.830 ± 0.025}} & \underline{\textbf{\textit{0.780 ± 0.046}}} & \underline{\textbf{\textit{0.787 ± 0.045}}} \\
SSNP* & 0.780 ± 0.050 & 0.779 ± 0.046 & 0.681 ± 0.094 & 0.690 ± 0.084 \\
P-SUMAP* & 0.780 ± 0.026 & 0.788 ± 0.023 & 0.666 ± 0.049 & 0.666 ± 0.049 \\
CE* & \textbf{\textit{0.829 ± 0.050}} & \textit{0.821 ± 0.048} & \textit{0.763 ± 0.051} & \textit{0.760 ± 0.050} \\
RF-PHATE* & \underline{\textbf{\textit{0.834 ± 0.025}}} & \underline{\textbf{\textit{0.835 ± 0.024}}} & 0.753 ± 0.034 & \textit{0.760 ± 0.038} \\
NCA* & \textit{0.826 ± 0.022} & 0.811 ± 0.025 & \textbf{\textit{0.774 ± 0.032}} & \textbf{\textit{0.761 ± 0.031}} \\
PACMAP & 0.771 ± 0.023 & 0.777 ± 0.022 & 0.708 ± 0.027 & 0.711 ± 0.028 \\
P-TSNE & 0.766 ± 0.025 & 0.767 ± 0.025 & 0.702 ± 0.030 & 0.683 ± 0.035 \\
AE & 0.762 ± 0.025 & 0.769 ± 0.025 & 0.709 ± 0.046 & 0.668 ± 0.054 \\
P-UMAP & 0.777 ± 0.027 & 0.762 ± 0.025 & 0.695 ± 0.030 & 0.676 ± 0.037 \\
SPCA* & 0.785 ± 0.024 & 0.777 ± 0.026 & 0.753 ± 0.026 & 0.749 ± 0.026 \\
PLS-DA* & 0.724 ± 0.022 & 0.714 ± 0.022 & 0.654 ± 0.023 & 0.634 ± 0.025 \\
CEBRA* & 0.806 ± 0.046 & 0.784 ± 0.049 & 0.736 ± 0.058 & 0.731 ± 0.065 \\
PCA & 0.755 ± 0.023 & 0.752 ± 0.022 & 0.741 ± 0.023 & 0.736 ± 0.024 \\

\\
\multicolumn{5}{c}{Aggregated Importances} \\
RF-AE* & \textbf{\textit{0.803 ± 0.040}} & \textit{0.818 ± 0.038} & \underline{\textbf{\textit{0.780 ± 0.055}}} & \underline{\textbf{\textit{0.784 ± 0.053}}} \\
SSNP* & 0.764 ± 0.060 & 0.770 ± 0.056 & 0.685 ± 0.100 & 0.695 ± 0.092 \\
P-SUMAP* & 0.757 ± 0.046 & 0.767 ± 0.044 & 0.647 ± 0.065 & 0.646 ± 0.063 \\
CE* & \textit{0.798 ± 0.062} & \textbf{\textit{0.819 ± 0.063}} & \textit{0.771 ± 0.068} & \textbf{\textit{0.772 ± 0.067}} \\
RF-PHATE* & \textit{0.798 ± 0.048} & \underline{\textbf{\textit{0.825 ± 0.046}}} & 0.753 ± 0.049 & 0.756 ± 0.050 \\
NCA* & \underline{\textbf{\textit{0.812 ± 0.045}}} & 0.804 ± 0.046 & \textbf{\textit{0.774 ± 0.048}} & \textit{0.762 ± 0.049} \\
PACMAP & 0.749 ± 0.046 & 0.758 ± 0.044 & 0.688 ± 0.044 & 0.690 ± 0.046 \\
P-TSNE & 0.744 ± 0.044 & 0.747 ± 0.044 & 0.684 ± 0.048 & 0.667 ± 0.051 \\
AE & 0.745 ± 0.043 & 0.750 ± 0.045 & 0.695 ± 0.061 & 0.655 ± 0.066 \\
P-UMAP & 0.760 ± 0.047 & 0.744 ± 0.047 & 0.674 ± 0.048 & 0.657 ± 0.054 \\
SPCA* & 0.770 ± 0.042 & 0.761 ± 0.047 & 0.742 ± 0.045 & 0.739 ± 0.046 \\
PLS-DA* & 0.717 ± 0.043 & 0.710 ± 0.044 & 0.664 ± 0.039 & 0.643 ± 0.040 \\
CEBRA* & 0.782 ± 0.063 & 0.778 ± 0.068 & 0.739 ± 0.073 & 0.733 ± 0.079 \\
PCA & 0.746 ± 0.045 & 0.743 ± 0.044 & 0.734 ± 0.042 & 0.729 ± 0.043 \\

\bottomrule
\end{tabular}
\end{sc}
\end{table}

\section{Extended visualizations and quantitative comparisons}\label{sec:oos_viz_supplemental}
We present OOS visualization plots and quantitative comparison (Table~\ref{tab:quantitative_per_dataset}) for all models on Sign MNIST (Fig.~\ref{fig:viz_sign_mnist_full}) and OrganC MNIST dataset (Fig.~\ref{fig:viz_organ_full}) to support our analysis in Section~\ref{subsec:oos_viz}. 

Table~\ref{tab:quantitative_per_dataset} shows the local ($s=\textit{QNX},\textit{Trust}$) and global ($s=\textit{Spear},\textit{Pearson}$) SIA scores, along with test $k$-NN accuracies for RF-AE and 13 baseline methods on the Sign MNIST and OrganC MNIST datasets. Our RF-AE method consistently ranks among the top three across all scores on both datasets.

Fig.~\ref{fig:viz_sign_mnist_full} presents visualizations of all models for the Sign MNIST (A–K) dataset. RF-AE effectively inherits the global structure of the RF-PHATE embeddings while providing greater detail within class clusters. In contrast, RF-PHATE tends to compress representations within each cluster, which are associated with individual classes. Although OOS embeddings are mostly assigned to their correct ground truth labels, the local arrangement of these samples on the sub-manifold is not easily visualized in RF-PHATE. RF-AE, however, expands the class clusters, revealing within-class patterns that are obscured in the RF-PHATE plot. For example, the top-right cluster in the RF-AE plot illustrates different ways to represent the letter ``C'', showing a logical transition between variations based on hand shadowing and orientation. Such nuanced differences are more challenging to detect in RF-PHATE, which compresses these representations into an overly restrictive branch structure. This limitation of RF-PHATE may stem from excessive reliance on the diffusion operator, which overemphasizes global smoothing. Since RF-GAP already captures local-to-global supervised neighborhoods effectively, the additional diffusion applied by RF-PHATE likely diminishes fine-grained local details. Thus, we have demonstrated that RF-AE offers a superior balance for visualizing the local-to-global supervised structure compared to the basic RF-PHATE kernel extension. 

P-TSNE is effective at identifying clusters of similar samples but often splits points from the same class into distinct, distant clusters. This appears to result from variations such as background shadowing, which obstruct the important part of the image. Thus, ``G'' and ``H'' instances are closer than expected due to similar shadowing. In contrast, RF-AE correctly assigns ``G'' and ``H'' instances to their own clusters while dissociating between same-class points with different shadowing, effectively reflecting within-class variations. This demonstrates that P-TSNE is overly sensitive to irrelevant factors, such as background differences, which are unrelated to the underlying labels.  Similarly, P-UMAP, P-SUMAP and PACMAP exhibit this sensitivity but produces sparser representations. Despite being a supervised method, P-SUMAP incorporates class labels in a way that artificially clusters same-class points, potentially oversimplifying their intrinsic relationships.
CEBRA yields a circular pattern that offers limited utility for qualitative interpretation. CE and SSNP over-separates the classes without connections between them, further hindering the visualization of inter-class relationships. NCA retains decent local and global relationships, but within-class variations and transitions between classes are visually less evident than in regularized RF-AE. Other methods produced noisy embeddings.



For the OrganC MNIST dataset, all models are visualized in Fig.~\ref{fig:viz_organ_full}. As analyzed in Section~\ref{subsec:oos_viz}, RF-AE achieves notable improvements over competing methods by enabling finer distinctions between organ types. This is particularly evident in its ability to differentiate the left and right kidneys—whereas other methods tend to merge these classes, RF-AE separates them while maintaining their proximity in the embedding space. This reflects anatomical similarity without losing class identity. 

RF-PHATE maintains global structure but merges similar classes, reducing fine-grained resolution. P-TSNE, P-UMAP, and P-SUMAP form overlapping clusters, leading to cluttered embeddings that hinder interpretation. NCA, SPCA, and PCA create scattered or overlapping layouts with weak separation of organ types, indicating limited class-specific representation. PACMAP and CE show broken structures, where organ clusters are split or linearly aligned without clear biological meaning. SSNP and CEBRA fail to produce informative embeddings, with SSNP collapsing the data and CEBRA displaying an artificial circular pattern. PLS-DA and AE partially separate some classes but mix others, reflecting suboptimal balance between class identity and spatial organization.

Overall, RF-AE preserves the structural integrity of the data while substantially enhancing class separability. These qualitative observations are consistent with the quantitative results in Table~\ref{tab:quantitative_per_dataset}, where RF-AE outperforms other methods in both $k$-NN accuracy and local-to-global SIA. 

\begin{table}[ht]
\caption{Local ($s=\textit{QNX},\textit{Trust}$) and global ($s=\textit{Spear},\textit{Pearson}$) SIA scores, along with test $k$-NN accuracies for our RF-AE method and 13 baselines. Scores are shown as mean $\pm$ std across 10 repetitions. Top three values in each column are highlighted with underlined bold italics (first), bold italics (second), and italics (third). Supervised methods are marked by an asterisk.}
\label{tab:quantitative_per_dataset}

\centering
\small
\begin{sc}
\begin{tabular}{lccccc}
\toprule
 & \multicolumn{2}{c}{Local SIA} & \multicolumn{2}{c}{Global SIA} & \\
 \cmidrule(r){2-5}
 & QNX & Trust & Spear & Pearson & \multicolumn{1}{c}{$k$-NN Acc} \\
\midrule

\multicolumn{6}{c}{Sign MNIST} \\
RF-AE* & \textbf{\textit{0.816 ± 0.007}} & \textit{0.851 ± 0.013} & \textbf{\textit{0.724 ± 0.076}} & \textbf{\textit{0.719 ± 0.084}} & \textbf{\textit{0.983 ± 0.003}} \\
SSNP* & 0.139 ± 0.401 & 0.249 ± 0.381 & 0.174 ± 0.391 & 0.414 ± 0.219 & 0.189 ± 0.258 \\
P-SUMAP* & 0.700 ± 0.010 & 0.618 ± 0.009 & 0.449 ± 0.079 & 0.401 ± 0.103 & 0.967 ± 0.004 \\
CE* & 0.620 ± 0.408 & 0.627 ± 0.418 & \textit{0.695 ± 0.135} & \textit{0.646 ± 0.184} & 0.464 ± 0.179 \\
RF-PHATE* & \underline{\textbf{\textit{0.821 ± 0.008}}} & \textbf{\textit{0.861 ± 0.011}} & 0.574 ± 0.103 & 0.450 ± 0.156 & \textit{0.975 ± 0.004} \\
NCA* & \textit{0.793 ± 0.013} & \underline{\textbf{\textit{0.873 ± 0.012}}} & 0.596 ± 0.088 & 0.523 ± 0.110 & \underline{\textbf{\textit{0.984 ± 0.002}}} \\
PACMAP & 0.718 ± 0.007 & 0.616 ± 0.008 & 0.402 ± 0.026 & 0.382 ± 0.029 & 0.930 ± 0.005 \\
P-TSNE & 0.689 ± 0.010 & 0.535 ± 0.021 & 0.304 ± 0.050 & 0.210 ± 0.084 & 0.806 ± 0.032 \\
AE & 0.668 ± 0.019 & 0.625 ± 0.046 & 0.403 ± 0.165 & 0.361 ± 0.181 & 0.524 ± 0.131 \\
P-UMAP & 0.665 ± 0.012 & 0.551 ± 0.011 & 0.304 ± 0.042 & 0.223 ± 0.064 & 0.787 ± 0.026 \\
SPCA* & 0.676 ± 0.005 & 0.598 ± 0.009 & 0.552 ± 0.011 & 0.519 ± 0.012 & 0.479 ± 0.009 \\
PLS-DA* & 0.740 ± 0.008 & 0.729 ± 0.009 & \underline{\textbf{\textit{0.737 ± 0.011}}} & \underline{\textbf{\textit{0.735 ± 0.012}}} & 0.357 ± 0.008 \\
CEBRA* & 0.742 ± 0.064 & 0.744 ± 0.132 & 0.586 ± 0.129 & 0.564 ± 0.149 & 0.430 ± 0.091 \\
PCA & 0.660 ± 0.011 & 0.588 ± 0.015 & 0.576 ± 0.013 & 0.589 ± 0.012 & 0.314 ± 0.006 \\
\\

\multicolumn{6}{c}{OrganC MNIST} \\
RF-AE* & \textbf{\textit{0.898 ± 0.008}} & \underline{\textbf{\textit{0.930 ± 0.009}}} & \textbf{\textit{0.898 ± 0.014}} & \underline{\textbf{\textit{0.900 ± 0.012}}} & \underline{\textbf{\textit{0.761 ± 0.010}}} \\
SSNP* & 0.871 ± 0.028 & 0.906 ± 0.019 & 0.773 ± 0.358 & 0.784 ± 0.096 & \textit{0.636 ± 0.154} \\
P-SUMAP* & 0.873 ± 0.006 & 0.898 ± 0.006 & 0.886 ± 0.006 & 0.875 ± 0.006 & 0.618 ± 0.018 \\
CE* & 0.870 ± 0.022 & 0.887 ± 0.024 & 0.854 ± 0.076 & 0.846 ± 0.073 & 0.570 ± 0.193 \\
RF-PHATE* & \textit{0.893 ± 0.006} & \textit{0.913 ± 0.005} & \underline{\textbf{\textit{0.899 ± 0.009}}} & \textbf{\textit{0.898 ± 0.013}} & \textbf{\textit{0.656 ± 0.009}} \\
NCA* & 0.892 ± 0.006 & 0.896 ± 0.005 & 0.870 ± 0.005 & 0.868 ± 0.005 & 0.524 ± 0.000 \\
PACMAP & 0.881 ± 0.007 & 0.902 ± 0.006 & \textit{0.893 ± 0.007} & \textit{0.893 ± 0.006} & 0.632 ± 0.009 \\
P-TSNE & 0.867 ± 0.006 & 0.892 ± 0.005 & 0.874 ± 0.005 & 0.871 ± 0.005 & 0.474 ± 0.003 \\
AE & 0.875 ± 0.006 & 0.899 ± 0.005 & 0.873 ± 0.011 & 0.834 ± 0.022 & 0.563 ± 0.014 \\
P-UMAP & 0.881 ± 0.006 & 0.898 ± 0.004 & 0.870 ± 0.005 & 0.868 ± 0.005 & 0.475 ± 0.005 \\
SPCA* & \underline{\textbf{\textit{0.916 ± 0.005}}} & \textbf{\textit{0.926 ± 0.005}} & \textit{0.895 ± 0.005} & 0.886 ± 0.005 & 0.429 ± 0.000 \\
PLS-DA* & 0.866 ± 0.006 & 0.869 ± 0.005 & 0.860 ± 0.005 & 0.859 ± 0.005 & 0.358 ± 0.000 \\
CEBRA* & 0.858 ± 0.033 & 0.881 ± 0.032 & 0.872 ± 0.030 & 0.862 ± 0.027 & 0.358 ± 0.034 \\
PCA & 0.861 ± 0.005 & 0.879 ± 0.005 & 0.865 ± 0.005 & 0.861 ± 0.005 & 0.414 ± 0.000 \\
\bottomrule
\end{tabular}
\end{sc}
\end{table}

\begin{figure}[t]
    \centering
    \includegraphics[width = \textwidth]{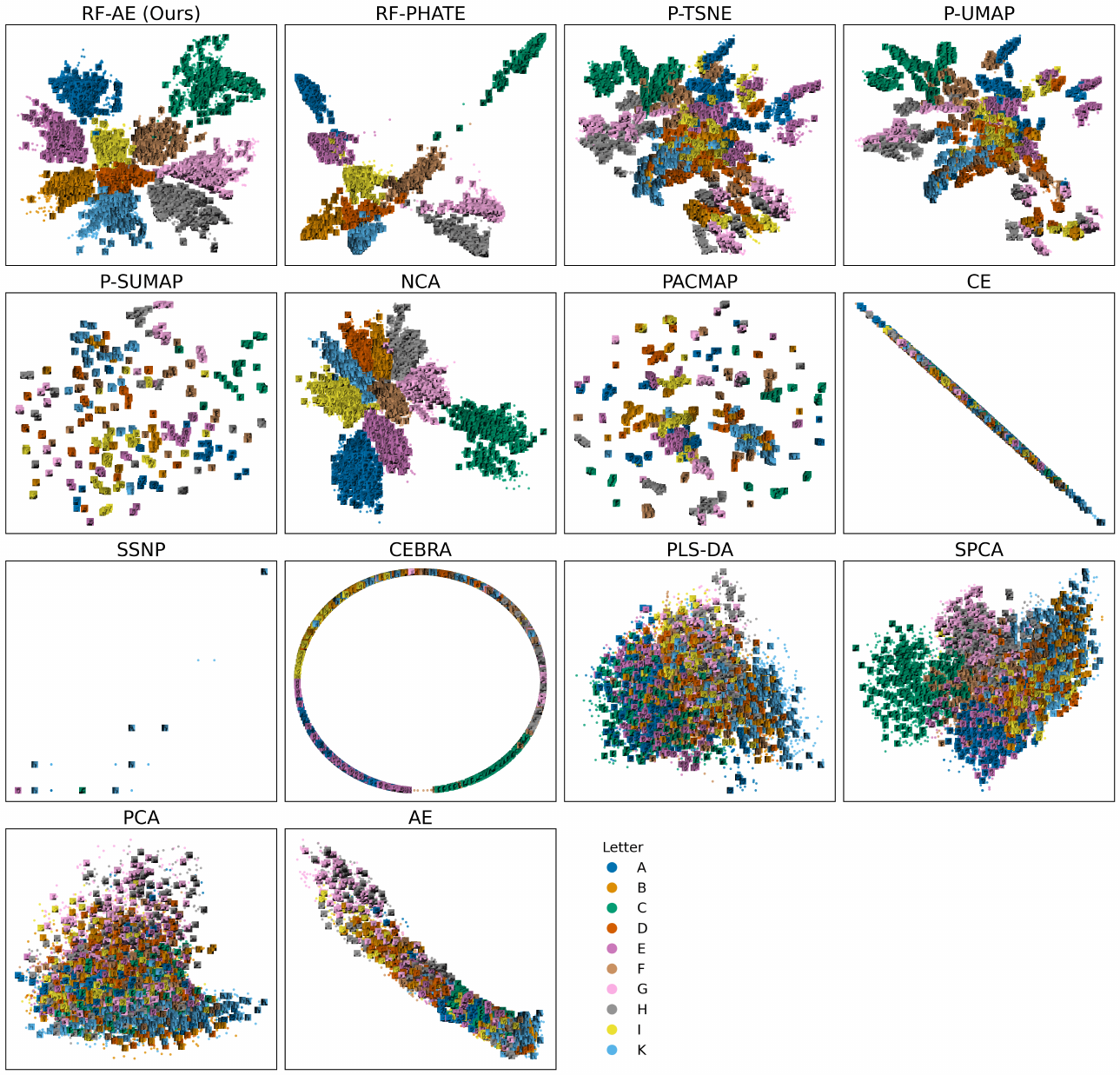}
    \caption{Visualization of the Sign MNIST (A--K) dataset (Table~\ref{tab:rfae_data}) using 14 dimensionality reduction methods. Training points are shown as color-coded circles based on their labels, while test points are displayed with their original images, tinted to match their labels. Refer to Section~\ref{subsec:oos_viz} and Appendix~\ref{sec:oos_viz_supplemental} for a full qualitative analysis.}
\label{fig:viz_sign_mnist_full}
\end{figure}


\begin{figure}[t]
    \centering
    \includegraphics[width = \textwidth]{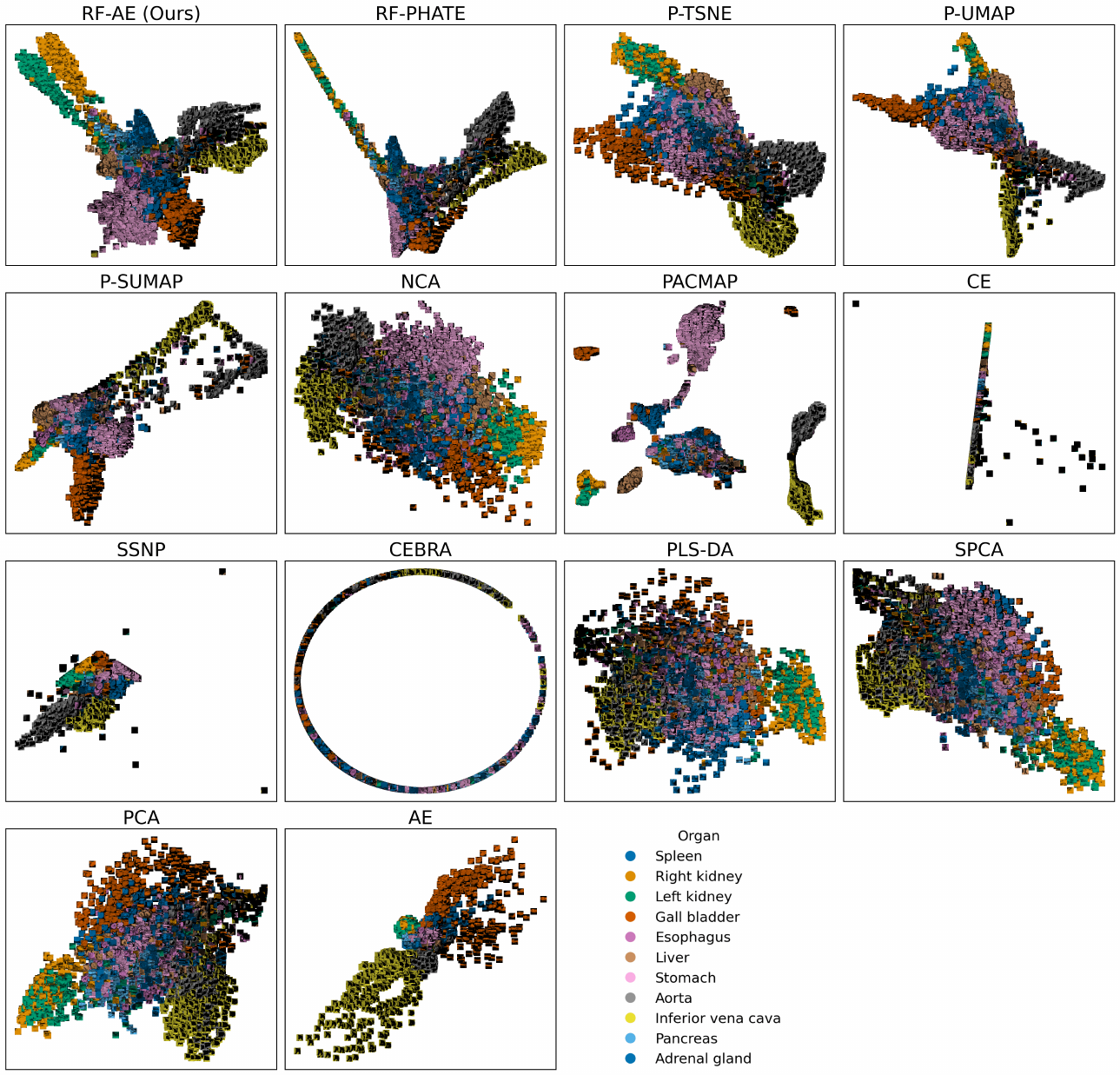}
    \caption{Visualization of the OrganC MNIST dataset (Table~\ref{tab:rfae_data}) using 14 dimensionality reduction methods. Test points are displayed with their original images, tinted to match their labels. Training points are omitted for clarity. Refer to Appendix~\ref{sec:oos_viz_supplemental} for a full qualitative analysis.}
    \label{fig:viz_organ_full}
\end{figure}

\section{Broader impacts}
\label{sec:impacts}
This paper advances guided data representation learning by integrating expert-derived data annotations and enabling out-of-sample extension, allowing generalization beyond the training set. The proposed method assists decision-makers with interpretable visualizations while improving scalability and applicability in semi-supervised tasks. In particular, RF-AE can support expert- or AI-based disease diagnosis by projecting incoming patient instances into a 2D space, where they can be contextualized relative to existing embeddings. This visualization enables practitioners to assess whether a given prediction aligns well with known structures or deviates from them, offering a valuable indicator of prediction reliability. Overall, the method has potential societal impacts in biomedical research, as well as data-driven insights in healthcare, finance, and multimedia.


\newpage
\clearpage

\section*{NeurIPS Paper Checklist}

\begin{enumerate}

\item {\bf Claims}
    \item[] Question: Do the main claims made in the abstract and introduction accurately reflect the paper's contributions and scope?
    \item[] Answer: \answerYes{} 
    \item[] Justification: Our abstract and introduction accurately reflect the main contributions and scope of the paper. These claims are supported in the main text and appendix through methodological descriptions and empirical results. 
    \item[] Guidelines:
    \begin{itemize}
        \item The answer NA means that the abstract and introduction do not include the claims made in the paper.
        \item The abstract and/or introduction should clearly state the claims made, including the contributions made in the paper and important assumptions and limitations. A No or NA answer to this question will not be perceived well by the reviewers. 
        \item The claims made should match theoretical and experimental results, and reflect how much the results can be expected to generalize to other settings. 
        \item It is fine to include aspirational goals as motivation as long as it is clear that these goals are not attained by the paper. 
    \end{itemize}

\item {\bf Limitations}
    \item[] Question: Does the paper discuss the limitations of the work performed by the authors?
    \item[] Answer: \answerYes{} 
    \item[] Justification: We explicitly discuss the limitations of our method in the discussion section (Section~\ref{sec:discussion}). 
    \item[] Guidelines:
    \begin{itemize}
        \item The answer NA means that the paper has no limitation while the answer No means that the paper has limitations, but those are not discussed in the paper. 
        \item The authors are encouraged to create a separate "Limitations" section in their paper.
        \item The paper should point out any strong assumptions and how robust the results are to violations of these assumptions (e.g., independence assumptions, noiseless settings, model well-specification, asymptotic approximations only holding locally). The authors should reflect on how these assumptions might be violated in practice and what the implications would be.
        \item The authors should reflect on the scope of the claims made, e.g., if the approach was only tested on a few datasets or with a few runs. In general, empirical results often depend on implicit assumptions, which should be articulated.
        \item The authors should reflect on the factors that influence the performance of the approach. For example, a facial recognition algorithm may perform poorly when image resolution is low or images are taken in low lighting. Or a speech-to-text system might not be used reliably to provide closed captions for online lectures because it fails to handle technical jargon.
        \item The authors should discuss the computational efficiency of the proposed algorithms and how they scale with dataset size.
        \item If applicable, the authors should discuss possible limitations of their approach to address problems of privacy and fairness.
        \item While the authors might fear that complete honesty about limitations might be used by reviewers as grounds for rejection, a worse outcome might be that reviewers discover limitations that aren't acknowledged in the paper. The authors should use their best judgment and recognize that individual actions in favor of transparency play an important role in developing norms that preserve the integrity of the community. Reviewers will be specifically instructed to not penalize honesty concerning limitations.
    \end{itemize}

\item {\bf Theory assumptions and proofs}
    \item[] Question: For each theoretical result, does the paper provide the full set of assumptions and a complete (and correct) proof?
    \item[] Answer: \answerNA{} 
    \item[] Justification: While the paper includes mathematical formulations of the proposed method and evaluation metrics, it does not present formal theoretical results with assumptions or proofs.
    \item[] Guidelines: 
    \begin{itemize}
        \item The answer NA means that the paper does not include theoretical results. 
        \item All the theorems, formulas, and proofs in the paper should be numbered and cross-referenced.
        \item All assumptions should be clearly stated or referenced in the statement of any theorems.
        \item The proofs can either appear in the main paper or the supplemental material, but if they appear in the supplemental material, the authors are encouraged to provide a short proof sketch to provide intuition. 
        \item Inversely, any informal proof provided in the core of the paper should be complemented by formal proofs provided in appendix or supplemental material.
        \item Theorems and Lemmas that the proof relies upon should be properly referenced. 
    \end{itemize}

    \item {\bf Experimental result reproducibility}
    \item[] Question: Does the paper fully disclose all the information needed to reproduce the main experimental results of the paper to the extent that it affects the main claims and/or conclusions of the paper (regardless of whether the code and data are provided or not)?
    \item[] Answer: \answerYes{} 
    \item[] Justification: The paper provides sufficient details to reproduce the main experimental results, including a description of the model architecture (Section~\ref{subsec:rfae_arch}), training procedures (Section~\ref{subsec:quant_comp}), dataset (Appendix~\ref{sec:data}) and evaluation metrics (Section~\ref{subsec:quantify_oos_embedding}). Appendix~\ref{sec:rfae_exp_setting} also contains the full experimental settings to support reproducibility. 
    \item[] Guidelines:
    \begin{itemize}
        \item The answer NA means that the paper does not include experiments.
        \item If the paper includes experiments, a No answer to this question will not be perceived well by the reviewers: Making the paper reproducible is important, regardless of whether the code and data are provided or not.
        \item If the contribution is a dataset and/or model, the authors should describe the steps taken to make their results reproducible or verifiable. 
        \item Depending on the contribution, reproducibility can be accomplished in various ways. For example, if the contribution is a novel architecture, describing the architecture fully might suffice, or if the contribution is a specific model and empirical evaluation, it may be necessary to either make it possible for others to replicate the model with the same dataset, or provide access to the model. In general. releasing code and data is often one good way to accomplish this, but reproducibility can also be provided via detailed instructions for how to replicate the results, access to a hosted model (e.g., in the case of a large language model), releasing of a model checkpoint, or other means that are appropriate to the research performed.
        \item While NeurIPS does not require releasing code, the conference does require all submissions to provide some reasonable avenue for reproducibility, which may depend on the nature of the contribution. For example
        \begin{enumerate}
            \item If the contribution is primarily a new algorithm, the paper should make it clear how to reproduce that algorithm.
            \item If the contribution is primarily a new model architecture, the paper should describe the architecture clearly and fully.
            \item If the contribution is a new model (e.g., a large language model), then there should either be a way to access this model for reproducing the results or a way to reproduce the model (e.g., with an open-source dataset or instructions for how to construct the dataset).
            \item We recognize that reproducibility may be tricky in some cases, in which case authors are welcome to describe the particular way they provide for reproducibility. In the case of closed-source models, it may be that access to the model is limited in some way (e.g., to registered users), but it should be possible for other researchers to have some path to reproducing or verifying the results.
        \end{enumerate}
    \end{itemize}

\item {\bf Open access to data and code}
    \item[] Question: Does the paper provide open access to the data and code, with sufficient instructions to faithfully reproduce the main experimental results, as described in supplemental material?
    \item[] Answer: \answerYes{} 
    \item[] Justification: We release the full source code, preprocessing scripts, and instructions to reproduce all main experimental results upon publication. An anonymized version of the code and data access links are included in a single zip file to support reproducibility. 
    \item[] Guidelines:
    \begin{itemize}
        \item The answer NA means that paper does not include experiments requiring code.
        \item Please see the NeurIPS code and data submission guidelines (\url{https://nips.cc/public/guides/CodeSubmissionPolicy}) for more details.
        \item While we encourage the release of code and data, we understand that this might not be possible, so “No” is an acceptable answer. Papers cannot be rejected simply for not including code, unless this is central to the contribution (e.g., for a new open-source benchmark).
        \item The instructions should contain the exact command and environment needed to run to reproduce the results. See the NeurIPS code and data submission guidelines (\url{https://nips.cc/public/guides/CodeSubmissionPolicy}) for more details.
        \item The authors should provide instructions on data access and preparation, including how to access the raw data, preprocessed data, intermediate data, and generated data, etc.
        \item The authors should provide scripts to reproduce all experimental results for the new proposed method and baselines. If only a subset of experiments are reproducible, they should state which ones are omitted from the script and why.
        \item At submission time, to preserve anonymity, the authors should release anonymized versions (if applicable).
        \item Providing as much information as possible in supplemental material (appended to the paper) is recommended, but including URLs to data and code is permitted.
    \end{itemize}

\item {\bf Experimental setting/details}
    \item[] Question: Does the paper specify all the training and test details (e.g., data splits, hyperparameters, how they were chosen, type of optimizer, etc.) necessary to understand the results?
    \item[] Answer: \answerYes{} 
    \item[] Justification: The paper specifies all relevant experimental details, including data splits, model architecture, training hyperparameters, optimization settings, and evaluation metrics. These details are provided in the main text and further expanded in Appendix Section~\ref{subsec:exp-compute-resource} to ensure full transparency and reproducibility.
    \item[] Guidelines:
    \begin{itemize}
        \item The answer NA means that the paper does not include experiments.
        \item The experimental setting should be presented in the core of the paper to a level of detail that is necessary to appreciate the results and make sense of them.
        \item The full details can be provided either with the code, in appendix, or as supplemental material.
    \end{itemize}

\item {\bf Experiment statistical significance}
    \item[] Question: Does the paper report error bars suitably and correctly defined or other appropriate information about the statistical significance of the experiments?
    \item[] Answer: \answerYes{} 
    \item[] Justification: The paper reports error bars for key experimental results, calculated across 10 runs with different random seeds over 20 different datasets, as described in Section~\ref{subsec:quant_comp} and Appendix Section~\ref{sec:ablation_supplemental}. The error bars represent the standard deviation of the metrics and are clearly indicated in the relevant tables.
    \item[] Guidelines:
    \begin{itemize}
        \item The answer NA means that the paper does not include experiments.
        \item The authors should answer "Yes" if the results are accompanied by error bars, confidence intervals, or statistical significance tests, at least for the experiments that support the main claims of the paper.
        \item The factors of variability that the error bars are capturing should be clearly stated (for example, train/test split, initialization, random drawing of some parameter, or overall run with given experimental conditions).
        \item The method for calculating the error bars should be explained (closed form formula, call to a library function, bootstrap, etc.)
        \item The assumptions made should be given (e.g., Normally distributed errors).
        \item It should be clear whether the error bar is the standard deviation or the standard error of the mean.
        \item It is OK to report 1-sigma error bars, but one should state it. The authors should preferably report a 2-sigma error bar than state that they have a 96\% CI, if the hypothesis of Normality of errors is not verified.
        \item For asymmetric distributions, the authors should be careful not to show in tables or figures symmetric error bars that would yield results that are out of range (e.g. negative error rates).
        \item If error bars are reported in tables or plots, The authors should explain in the text how they were calculated and reference the corresponding figures or tables in the text.
    \end{itemize}

\item {\bf Experiments compute resources}
    \item[] Question: For each experiment, does the paper provide sufficient information on the computer resources (type of compute workers, memory, time of execution) needed to reproduce the experiments?
    \item[] Answer: \answerYes{} 
    \item[] Justification: The paper provides sufficient information on the compute resources required to reproduce the experiments, including hardware specifications and runtime estimates. These details are documented in Appendix~\ref{sec:rfae_exp_setting}.
    \item[] Guidelines:
    \begin{itemize}
        \item The answer NA means that the paper does not include experiments.
        \item The paper should indicate the type of compute workers CPU or GPU, internal cluster, or cloud provider, including relevant memory and storage.
        \item The paper should provide the amount of compute required for each of the individual experimental runs as well as estimate the total compute. 
        \item The paper should disclose whether the full research project required more compute than the experiments reported in the paper (e.g., preliminary or failed experiments that didn't make it into the paper). 
    \end{itemize}
    
\item {\bf Code of ethics}
    \item[] Question: Does the research conducted in the paper conform, in every respect, with the NeurIPS Code of Ethics \url{https://neurips.cc/public/EthicsGuidelines}?
    \item[] Answer: \answerYes{} 
    \item[] Justification: This research presented in the paper fully complies with the NeurIPS Code of Ethics. 
    \item[] Guidelines:
    \begin{itemize}
        \item The answer NA means that the authors have not reviewed the NeurIPS Code of Ethics.
        \item If the authors answer No, they should explain the special circumstances that require a deviation from the Code of Ethics.
        \item The authors should make sure to preserve anonymity (e.g., if there is a special consideration due to laws or regulations in their jurisdiction).
    \end{itemize}

\item {\bf Broader impacts}
    \item[] Question: Does the paper discuss both potential positive societal impacts and negative societal impacts of the work performed?
    \item[] Answer: \answerYes{} 
    \item[] Justification: The paper discusses potential positive and negative societal impacts in Appendix Section~\ref{sec:impacts}
    \item[] Guidelines:
    \begin{itemize}
        \item The answer NA means that there is no societal impact of the work performed.
        \item If the authors answer NA or No, they should explain why their work has no societal impact or why the paper does not address societal impact.
        \item Examples of negative societal impacts include potential malicious or unintended uses (e.g., disinformation, generating fake profiles, surveillance), fairness considerations (e.g., deployment of technologies that could make decisions that unfairly impact specific groups), privacy considerations, and security considerations.
        \item The conference expects that many papers will be foundational research and not tied to particular applications, let alone deployments. However, if there is a direct path to any negative applications, the authors should point it out. For example, it is legitimate to point out that an improvement in the quality of generative models could be used to generate deepfakes for disinformation. On the other hand, it is not needed to point out that a generic algorithm for optimizing neural networks could enable people to train models that generate Deepfakes faster.
        \item The authors should consider possible harms that could arise when the technology is being used as intended and functioning correctly, harms that could arise when the technology is being used as intended but gives incorrect results, and harms following from (intentional or unintentional) misuse of the technology.
        \item If there are negative societal impacts, the authors could also discuss possible mitigation strategies (e.g., gated release of models, providing defenses in addition to attacks, mechanisms for monitoring misuse, mechanisms to monitor how a system learns from feedback over time, improving the efficiency and accessibility of ML).
    \end{itemize}
    
\item {\bf Safeguards}
    \item[] Question: Does the paper describe safeguards that have been put in place for responsible release of data or models that have a high risk for misuse (e.g., pretrained language models, image generators, or scraped datasets)?
    \item[] Answer: \answerNA{} 
    \item[] Justification: The paper does not involve models or datasets that pose a high risk of misuse or dual use. The proposed method is intended for scientific research and does not generate or process sensitive or potentially harmful content.
    \item[] Guidelines:
    \begin{itemize}
        \item The answer NA means that the paper poses no such risks.
        \item Released models that have a high risk for misuse or dual-use should be released with necessary safeguards to allow for controlled use of the model, for example by requiring that users adhere to usage guidelines or restrictions to access the model or implementing safety filters. 
        \item Datasets that have been scraped from the Internet could pose safety risks. The authors should describe how they avoided releasing unsafe images.
        \item We recognize that providing effective safeguards is challenging, and many papers do not require this, but we encourage authors to take this into account and make a best faith effort.
    \end{itemize}

\item {\bf Licenses for existing assets}
    \item[] Question: Are the creators or original owners of assets (e.g., code, data, models), used in the paper, properly credited and are the license and terms of use explicitly mentioned and properly respected?
    \item[] Answer: \answerYes{} 
    \item[] Justification: All external datasets (Appendix~\ref{sec:data}) and models (Section~\ref{subsec:quant_comp} used in the paper are properly cited with references to their original sources. 
    \item[] Guidelines:
    \begin{itemize}
        \item The answer NA means that the paper does not use existing assets.
        \item The authors should cite the original paper that produced the code package or dataset.
        \item The authors should state which version of the asset is used and, if possible, include a URL.
        \item The name of the license (e.g., CC-BY 4.0) should be included for each asset.
        \item For scraped data from a particular source (e.g., website), the copyright and terms of service of that source should be provided.
        \item If assets are released, the license, copyright information, and terms of use in the package should be provided. For popular datasets, \url{paperswithcode.com/datasets} has curated licenses for some datasets. Their licensing guide can help determine the license of a dataset.
        \item For existing datasets that are re-packaged, both the original license and the license of the derived asset (if it has changed) should be provided.
        \item If this information is not available online, the authors are encouraged to reach out to the asset's creators.
    \end{itemize}

\item {\bf New assets}
    \item[] Question: Are new assets introduced in the paper well documented and is the documentation provided alongside the assets?
    \item[] Answer: \answerYes{} 
    \item[] Justification: The paper introduces new models, which are described in detail in Section~\ref{sec:methods} and appendix. We provide code on model architecture, training procedures, and usage instructions to ensure reproducibility. 
    \item[] Guidelines:
    \begin{itemize}
        \item The answer NA means that the paper does not release new assets.
        \item Researchers should communicate the details of the dataset/code/model as part of their submissions via structured templates. This includes details about training, license, limitations, etc. 
        \item The paper should discuss whether and how consent was obtained from people whose asset is used.
        \item At submission time, remember to anonymize your assets (if applicable). You can either create an anonymized URL or include an anonymized zip file.
    \end{itemize}

\item {\bf Crowdsourcing and research with human subjects}
    \item[] Question: For crowdsourcing experiments and research with human subjects, does the paper include the full text of instructions given to participants and screenshots, if applicable, as well as details about compensation (if any)? 
    \item[] Answer: \answerNA{} 
    \item[] Justification: The paper uses publicly available single-cell dataset that was collected and shared by third parties. We do not conduct any new research involving human subjects or crowdsourcing, and the data used is ethically sourced.
    \item[] Guidelines:
    \begin{itemize}
        \item The answer NA means that the paper does not involve crowdsourcing nor research with human subjects.
        \item Including this information in the supplemental material is fine, but if the main contribution of the paper involves human subjects, then as much detail as possible should be included in the main paper. 
        \item According to the NeurIPS Code of Ethics, workers involved in data collection, curation, or other labor should be paid at least the minimum wage in the country of the data collector. 
    \end{itemize}

\item {\bf Institutional review board (IRB) approvals or equivalent for research with human subjects}
    \item[] Question: Does the paper describe potential risks incurred by study participants, whether such risks were disclosed to the subjects, and whether Institutional Review Board (IRB) approvals (or an equivalent approval/review based on the requirements of your country or institution) were obtained?
    \item[] Answer: \answerNA{} 
    \item[] Justification:  The paper does not involve any direct research with human subjects or participant interaction. All data used are publicly available and were collected under proper ethical oversight by the original data providers.
    \item[] Guidelines:
    \begin{itemize}
        \item The answer NA means that the paper does not involve crowdsourcing nor research with human subjects.
        \item Depending on the country in which research is conducted, IRB approval (or equivalent) may be required for any human subjects research. If you obtained IRB approval, you should clearly state this in the paper. 
        \item We recognize that the procedures for this may vary significantly between institutions and locations, and we expect authors to adhere to the NeurIPS Code of Ethics and the guidelines for their institution. 
        \item For initial submissions, do not include any information that would break anonymity (if applicable), such as the institution conducting the review.
    \end{itemize}

\item {\bf Declaration of LLM usage}
    \item[] Question: Does the paper describe the usage of LLMs if it is an important, original, or non-standard component of the core methods in this research? Note that if the LLM is used only for writing, editing, or formatting purposes and does not impact the core methodology, scientific rigorousness, or originality of the research, declaration is not required.
    \item[] Answer: \answerNA{} 
    \item[] Justification: No large language models (LLMs) were used in the development of the core methods presented in this research. 
    \item[] Guidelines:
    \begin{itemize}
        \item The answer NA means that the core method development in this research does not involve LLMs as any important, original, or non-standard components.
        \item Please refer to our LLM policy (\url{https://neurips.cc/Conferences/2025/LLM}) for what should or should not be described.
    \end{itemize}

\end{enumerate}

\end{document}